\newtheorem{theorem}{Theorem}
\newtheorem{lemma}{Lemma}
\newtheorem{remark}{Remark}
\title{Semiparametric Efficient Inference \\in Adaptive Experiments}
\author{Thomas Cook\thanks{Some of this work was performed while at J.P. Morgan Chase \& Co.} 
 \\ 
        Dept. of Mathematics and Statistics \\
        University of Massachusetts \\
        \texttt{tjcook@umass.edu} 
        \And
        Alan Mishler \\
        J.P. Morgan AI Research \\
        J.P. Morgan Chase \& Co. \\
        \texttt{alan.mishler@jpmchase.com}
        \And
        Aaditya Ramdas \\
        Dept. of Statistics \& Data Science and Machine Learning\\
        Carnegie Mellon University \\
        \texttt{aramdas@cmu.edu}
}
\begin{document}
\maketitle

\begin{abstract}
  We consider the problem of efficient inference of the \emph{Average Treatment Effect} in a sequential experiment where the policy governing the assignment of subjects to treatment or control can change over time. We first provide a central limit theorem for the Adaptive Augmented Inverse-Probability Weighted estimator, which is semiparametric efficient, under weaker assumptions than those previously made in the literature. This central limit theorem enables efficient inference at fixed sample sizes. We then consider a sequential inference setting, deriving both asymptotic and nonasymptotic confidence sequences that are considerably tighter than previous methods. These \emph{anytime-valid} methods enable inference under data-dependent stopping times (sample sizes). Additionally, we use propensity score truncation techniques from the recent off-policy estimation literature to reduce the finite sample variance of our estimator without affecting the asymptotic variance. Empirical results demonstrate that our methods yield narrower confidence sequences than those previously developed in the literature while maintaining time-uniform error control.

\end{abstract}

\begin{keywords}
  {Average Treatment Effect, Anytime-valid Inference, Confidence Sequences}%
\end{keywords}

\section{Introduction}\label{sec:intro}

Randomized experiments with two treatment arms, also known as A/B tests, are widely used across many domains. Classical statistical tools (\emph{fixed-time} methods) require the analyst to select the sample size in advance and only perform inference when this sample size is reached. However, modern A/B testing platforms enable continuous monitoring of results, which allows analysts to make repeated decisions about whether to stop or continue an experiment based on the data observed so far. For example, an analyst might decide to run an experiment precisely until a test statistic becomes statistically significant, at which point they may stop and declare a treatment effective. When the test statistic is based on a fixed-time method, this can lead to inflated false positive (type-I error) rates. In fact, when used in this fashion, fixed-time methods based on the central limit theorem will in general cause these type-I error rates to go to 1 as $t \rightarrow \infty$, a result that is implied by the law of the iterated logarithm \citep{robbins1952sequential, johari2017PeekingTestsWhy}.

Statistical tools which enable valid inference in this setting are known as \emph{anytime-valid} methods. To illustrate the distinction, consider a confidence interval (CI) for a parameter of interest $\theta$. A $(1 - \alpha)$ CI for $\theta$ is an interval $[L_t, U_t]$ based on a sample of size $t$ with the property that
\begin{align}
    \forall t \in \mathbb{N}^+, \mathbb{P}(\theta \in [L_t, U_t]) \geq 1 - \alpha. \label{eq:ci_definition}
\end{align}
The coverage guarantee in \eqref{eq:ci_definition} only holds when the sample size (aka stopping time) $t$ is fixed in advance. By contrast, a \emph{confidence sequence} (CS) for $\theta$ is a sequence of intervals such that
\begin{align}
    \mathbb{P}(\forall t \in \mathbb{N}^+, \theta \in [L_t, U_t]) \geq 1-\alpha. \label{eq:cs_definition}
\end{align}
The coverage guarantee in \eqref{eq:cs_definition} is uniform in the sample size, which enables valid inference under data-dependent stopping times. This means that the analyst can continually monitor the experiment and adaptively choose when to stop without inflating the type-I error rate. A CS can be constructed so that time-uniform coverage is guaranteed asymptotically, a notion which is defined rigorously in \cite{waudbysmith2023timeuniform}.

Continuous monitoring also enables analysts to adaptively update the policy governing the assignment of subjects to treatment vs. control. Adaptive experiments in general enable more efficient estimation of treatment effects than non-adaptive experiments, such as a traditional experiment which randomly assigns each subject to treatment or control with probability 0.5 \citep{hahn2011adaptive}. Adaptive designs can therefore enable users to spend less resources running experiments while also minimizing the number of subjects that are exposed to ineffective or possibly harmful treatments. Additionally, mid-stream changes to experimental designs are sometimes imposed by considerations other than statistical efficiency, such as changes to budgets or unexpected impacts of the treatment on a business metric. It is therefore desirable to be able to perform inference under a wide range of adaptive settings.

As an example of this problem setting, consider a pharmaceutical company running a trial to test whether a treatment is effective or not to gain regulatory approval. The pharmaceutical company would like to gain approval quickly by concluding the trial using as few samples as possible. Without prior knowledge of the effect size, choosing an appropriate sample size is difficult. If an overly large sample size is chosen, then the company will have wastefully run the trial longer than necessary, keeping an effective treatment away from patients. However, if the chosen sample size is too small, then the trial may be inconclusive, in which case the entire cost of the trial is wasted. With standard fixed-time inference tools, the company must start the trial from the beginning, or abandon the treatment. If an anytime-valid method is used, then the company can simply continue the trial without worrying about inflating the false positive rate. To protect participants from potentially harmful treatments, it is common to track the results of an experiment as they are observed. It is also possible that a treatment is so overwhelmingly effective that the company would like to conclude the trial immediately and gain regulatory approval. All these settings involve data-dependent stopping times, which require anytime-valid methods rather than fixed-time methods.

In this paper, we consider inference for the Average Treatment Effect (ATE), which is the expected difference in outcomes between the two treatment arms, in the context of adaptive experiments. \citet{kato2021} proposed the Adaptive Augmented IPW (A2IPW) estimator, which, when coupled with a particular adaptive design, yields asymptotically efficient CIs based on the central limit theorem \citep{hahn2011adaptive}. Furthermore, \citet{kato2021} analyzed finite-sample regret (in terms of mean squared error) and showed that under certain conditions their adaptive design improves the regret bound compared to a non-adaptive design. They also provided a CS for the ATE using concentration inequalities based on nonasymptotic variants of the law of the iterated logarithm (LIL). 

In a similar, but independent line of research, \citet{dai2023clipogd} proposed an experimental design such that the variance of an adaptive IPW estimator asymptotically achieves the variance under the optimal Neyman design, the fixed (non-adaptive) design that minimizes the variance of the IPW estimator but is unknown in practice due to dependence on unknown parameters. They provided a treatment assignment policy that achieves sublinear regret, in terms of the estimator's variance, through the use of a variant of online stochastic projected gradient descent. They also provided an asymptotically-valid Chebyshev-type CI for the ATE. In contrast to our work, their work studied this problem from a design-based potential outcomes framework, which assumes deterministic potential outcomes, and did not consider contexts or covariates. Our work assumes that contexts and potential outcomes are drawn at random. We provide an asymptotically valid Wald-type CI for an A2IPW estimator and provide anytime-valid inference tools.

Our contributions are both theoretical and empirical. Theoretically, we provide semiparametric efficient (approximate) inference at fixed times under weaker (more general) conditions than in the previous literature. Empirically, we adapt state-of-the-art CSs to the setting of adaptive A/B tests and show superior performance in a sequential testing setting using a treatment assignment policy from our fixed-time theory. Explicitly, our contributions are:
\begin{itemize}
    \item \textbf{Theory: } We prove a central limit theorem for the A2IPW estimator under weaker assumptions than those utilized by \citet{kato2021}, enabling approximately valid inference at fixed sample sizes. While these results are valid for arbitrary adaptive designs (with some mild restrictions), we propose a design which adaptively truncates the treatment assignment probabilities for finite sample stability \citep{waudbysmith2022anytimevalid}. We show that this estimator is semiparametric efficient when paired with the proposed design.

    \item \textbf{Empirical Results: }We couple the A2IPW estimator with anytime-valid methods which yield much tighter intervals (more powerful inference) than the methods in \citet{kato2021}. We derive CSs based on test (super)martingales \citep{waudbysmith2022estimating}, as well as asymptotic CSs \citep{waudbysmith2023timeuniform}. 
\end{itemize}
To offer a demonstrative example, Figure~\ref{fig:single_iter} compares our CSs to previous work for a single sequential experiment. Our inference methods provide narrower intervals, leading to higher statistical power.

\begin{figure}
    \centering
    \includegraphics[width=0.85\textwidth]{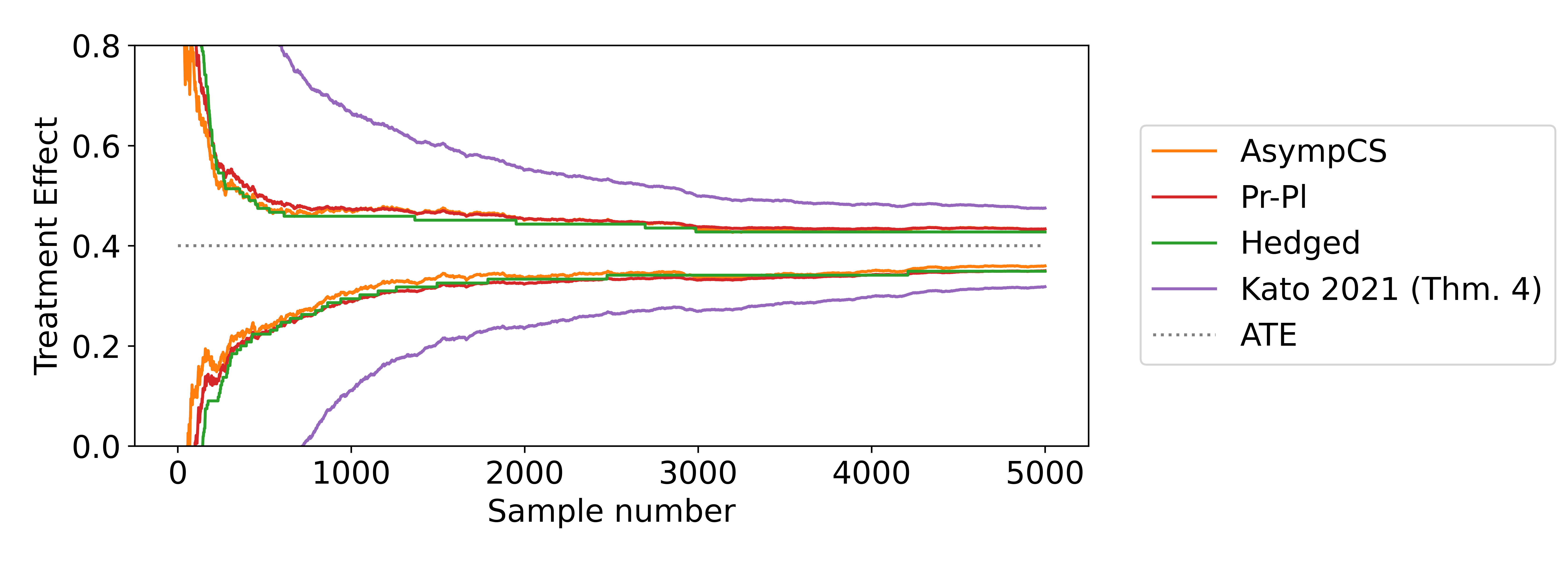}
    \caption{A single run of an experiment with bounded outcomes and the $ATE$ set to $0.4$ (simulation setup of Appendix~\ref{appdx:implementation_bounded} with $\pi_t \in [0.3,0.7]$). We propose confidence sequences (AsympCS, Pr-PI, Hedged) that are narrower than previous work \cite{kato2021}. }
    \label{fig:single_iter}
\end{figure}

\section{Problem Setting and Fixed-Time Inference}\label{sec:setting_and_fixedtime}

\subsection{Experimental Process}\label{sec:dgp}

We follow the same problem setting and data generating process as described in \cite{kato2021}, with minor modifications to their notation.
Subjects are indexed by $t \in \mathbb{N}^+$ and arrive sequentially. For each subject, the experimenter observes a context $X_t \in \mathcal{X}$, where $\mathcal{X}$ is the context domain, then assigns a treatment $A_t \in \{0, 1\}$, and then observes an outcome $Y_t \in \mathbb{R}$. 
We denote by $Y_t(a)$ the potential outcome corresponding to treatment $a$, for $a \in \{0, 1\}$, and we assume that $Y_t = \mathbbm{1}[A_t = 0]Y_t(0) + \mathbbm{1}[A_t = 1]Y_t(1)$, where $\mathbbm{1}[\cdot]$ denotes the indicator function. That is, we assume that a given subject's outcome depends only on their own treatment assignment and not on the treatment assignments of other subjects \citep{rubin1980sutva,Rubin1986CommentWI}.
The accumulated data after $T$ subjects (equivalently, $T$ time steps, where $T \in \{\mathbb{N}^+ \cup \infty\}$) consists of a set $\{(X_t, A_t, Y_t)\}_{t=1}^T$, whose distribution is 
\begin{equation*}
    (X_t, A_t, Y_t) \sim p(x)\pi_t(a \mid x,\Omega_{t-1}) p(y \mid a,x),
\end{equation*}
where $\Omega_{t-1} = \{(X_s, A_s, Y_s): s \leq t - 1\}$ denotes the \emph{history}. 
We denote the domain of $\Omega_{t-1}$ by $\mathcal{H}_{t-1}$.
We assume that $\{(X_t, Y_t(0), Y_t(1))\}_{t=1}^T$ are independent and identically distributed (iid).
However, our treatment assignments are not fixed over time, and depend on previous observations.
We define the propensity score $\pi_t(a \mid x, \Omega_{t-1})$ from the experimenter's \emph{policy}, $\pi_t : \mathcal{A} \times \mathcal{X} \times \mathcal{H}_{t-1} \mapsto [0, 1].$ 
Although the context and potential outcomes are independent over time, the observed outcomes $\{Y_t\}_{t=1}^{T}$ are dependent due to dependence in the policy. 

\begin{remark}
    Although we assume $\{X_t, Y_t(0), Y_t(1)\}_{t=1}^T$ to be iid, our results can be extended to a non-iid setting in which we estimate the running mean of the individual treatment effects, $\frac{1}{T} \sum_{t=1}^{T} \theta_t$, where $\theta_t = (Y_t(1) - Y_t(0))$ denotes the individual treatment effect for subject $t$. As a special case of that setting, when $\{X_t, Y_t(0), Y_t(1)\}_{t=1}^T$ are iid, we recover the ATE, since $\frac{1}{T} \sum_{t=1}^{T} \theta_t \to \theta_0$. 
\end{remark}

As data collection may be costly, time consuming, or high risk, the experimenter may not want to continue until some predetermined sample size. 
Conversely, an experimenter may reach a predetermined sample size and consider proceeding with further data collection, for example because they consider the results inconclusive. 
This type of data-dependent stopping requires methods which can handle peeking \citep{ramdas2023gametheoretic}, and is the focus of Section~\ref{sec:anytimevalid}.
Under the anytime-valid inference methods described in that section, the experimenter can choose to stop the experiment, continue under the current policy, or continue under a modified policy, without inflating the type-I error rate. 

\subsection{Estimating the Average Treatment Effect}

\paragraph{Additional notation:} Our notation follows \cite{kato2021} with minor modification. Let $a$ be a treatment in $\mathcal{A}$. We denote $\mathbb{E}[Y_t(a)\mid x]$, $\mathbb{E}[Y^2_t(a)\mid x]$, $\mathrm{Var}(Y_t(a)\mid x)$, and $\mathbb{E}[Y_t(1)  - Y_t(0)\mid x]$ as $f(a, x)$, $e(a, x)$, $v(a, x)$, and $\theta_0(x)$, respectively. Let $\hat{f}_{t}(a, x)$ and $\hat e_{t}(a, x)$ denote estimators of $f(a, x)$ and $e(a, x)$ constructed from $\Omega_{t}$, respectively.\footnote{In general, $\hat{f}$ can be any arbitrary estimator. In Theorem~\ref{theorem:MDS-CLT}, we simply require $\hat{f}$ to be consistent for $f$.} We denote the $\ell_2$ norm of a function as $\lVert f \rVert_2^2 = \int \left\{f(x)\right\}^2 d\mathbb{P}(x) $.

\paragraph{Adaptive Estimator:}
We denote the causal parameter of interest, the ATE, as $\theta_0 = \mathbb{E}(Y(1) - Y(0))$, where the subscript $t$ is dropped to emphasize time invariance.
In an experimental setting, the treatment probabilities are known and the Inverse-Probability Weighted (IPW) estimator produces an unbiased estimate of $\theta_0$. 
The Augmented IPW (AIPW) estimator extends the IPW estimator to include regression estimates, which can reduce the variance of the estimator, while maintaining unbiasedness \citep{robins1994aipw, chernozhukov2018ddml}.
Recently, \cite{kato2021} extended the AIPW estimator to the setting of an adaptive experiment by defining the \emph{Adaptive} AIPW estimator (A2IPW). 
The key difference between the two estimators is the use of data-dependent propensity scores $\pi_t$, which can be updated at each time point $t$ based on the accumulated data $\Omega_{t-1}$. 
The A2IPW estimator, given that $T$ subjects have been observed, is defined as $\hat{\theta}_T^{\mathrm{A2IPW}} = \frac{1}{T}\sum_{t=1}^T h_t$, where
\begin{equation*}\label{eq:ht}
h_t = \Bigg( \frac{\mathbbm{1}[A_t = 1] (Y_t - \hat{f}_{t-1}(1,X_t))}{\pi_t (1 \mid X_t, \Omega_{t-1})} - \frac{\mathbbm{1}[A_t = 0] (Y_t - \hat{f}_{t-1}(0,X_t))}{\pi_t (0 \mid X_t, \Omega_{t-1})} + {\hat{f}_{t-1}(1,X_t) - \hat{f}_{t-1}(0,X_t)} \Bigg) .
\end{equation*}

\cite{hadad2021adaptive} presented a more general form of an adaptive AIPW estimator. In their definition, the individual iterates are weighted by adaptive evaluation weights to guarantee asymptotic normality of the weighted average. $\hat{\theta}_T^{\mathrm{A2IPW}}$ can therefore be viewed as a special case in which the evaluation weights are set to be equal.

\cite{hahn2011adaptive} showed that the policy that minimizes the semiparametric lower bound of the asymptotic variance for regular estimators of the ATE is $\pi^{\mathrm{AIPW}}$, defined as 
$$\pi^{\mathrm{AIPW}}(1 \mid X_t) = \frac{\sqrt{v(1,X_t)}}{\sqrt{v(1,X_t)} + \sqrt{v(0,X_t)}}.$$
\cite{hahn2011adaptive} derived this lower bound in the context of a two-stage experimental design. \cite{armstrong2022asymptotic} showed that for estimating the ATE of a binary treatment, no further first-order asymptotic efficiency gain is possible in a purely adaptive experiment. This policy depends on unknown quantities of the underlying data generating process. 
The policy proposed in \cite{kato2021} estimates the unknown quantities from the observed data and is defined as 
\begin{equation*}\label{eq:pi_a2ipw_kato}
\pi_t^{\mathrm{A2IPW, Kato}}(1 \mid X_t, \Omega_{t-1}) = \frac{\sqrt{\hat{v}_{t-1}(1,X_t)}}{\sqrt{\hat{v}_{t-1}(1,X_t)} + \sqrt{\hat{v}_{t-1}(0,X_t)}} ,
\end{equation*}
where $\hat{v}_{t-1}$ denotes an estimate of $v$ using the first $t-1$ samples. For numerical stability, \cite{kato2021} mixed this policy with a non-adaptive policy that assigns treatment with probability $0.5$. As the sample size grows, the mixing gradually assigns a greater weight to the estimated optimal policy. This mixing scheme prevents noisy estimates of $v$ from inducing high variance in the observed $(h_t)_{t=1}^T$ early in the experiment and does not affect the asymptotic properties of the estimator. In a similar spirit, we explicitly define a truncation schedule for the propensity scores generated by our policy. However, our truncation schedule is not only useful for improving finite sample stability in practice; it is also a technical device that allows us to relax the assumptions needed for our results below.
We define our policy as
\begin{equation}\label{eq:pi_a2ipw}
    \pi_t^{\mathrm{A2IPW}}(1 \mid X_t, \Omega_{t-1}) = \left( \pi_t^{\mathrm{A2IPW, Kato}}(1 \mid X_t, \Omega_{t-1}) \vee \frac{1}{k_t} \right) \wedge \left(1 - \frac{1}{k_t}\right),
\end{equation}
where $k_t \in [2,\infty)$ is a user-chosen
truncation parameter. Since our theorem below holds in a more general setting, we can apply this truncation to arbitrary policies $\tilde{\pi}_t$, denoting 
\begin{equation}\label{eq:pi_arbitrary}
    \pi_t = \left( \tilde{\pi}_t  \vee \frac{1}{k_t} \right) \wedge \left(1 - \frac{1}{k_t}\right).
\end{equation}
Note that setting $k_t=2$ results in $\pi_t(A_t = 0 \mid X_t, \Omega_{t-1}) = \pi_t(A_t = 1 \mid X_t, \Omega_{t-1}) = 0.5$, and $k_t \to \infty$ results in the non-truncated policy $\tilde{\pi}_t$. 
The policy truncation that we utilize is inspired by \cite{waudbysmith2022anytimevalid}, where truncation circumvents required knowledge of the maximal importance weight in off-policy evaluation. Our empirical results show that truncation can improve finite-sample performance for well-chosen $k_t$, which aligns with the results in the preceding work.

\subsection{Fixed-Time Confidence Intervals}\label{sec:fixed_time_theory}
 
We now turn to constructing CIs with asymptotic coverage guarantees.
\cite{kato2021} defined $z_t = h_t - \theta_0$ and showed that $\{z_t\}_{t=1}^{T}$ forms a \emph{martingale difference sequence} (MDS). They then utilized a MDS central limit theorem to show $\hat{\theta}^{\mathrm{A2IPW}}$ is asymptotically Gaussian. They further showed that $\hat{\theta}^{\mathrm{A2IPW}}$ is semiparametric efficient under the asymptotic policy.
We provide the same results under weaker assumptions, as elaborated after the theorem.

\begin{theorem}[Asymptotic Distribution of $\hat{\theta}_T^{\mathrm{A2IPW}}$]\label{theorem:MDS-CLT} Assume $\{(X_t, A_t, Y_t ) \}_{t=1}^{T}$ follow the data generating process described in Section~\ref{sec:dgp}. Let $\tilde{\pi}_t: \mathcal{A} \times \mathcal{X} \mapsto (0, 1)$ be an arbitrary sequence of policies, and let $\pi_t$ be the corresponding truncated policies as defined in~\eqref{eq:pi_arbitrary}. Assume $\frac{1}{\pi(a \mid x)} < C_1 < \infty$ and $v(a,x) < C_2 < \infty$ for all $x \in \mathcal{X}$ and $a \in \{0,1\}$ for some constants $C_1$ and $C_2$. Assume $k_t \lVert \hat{f}_{t} - f \rVert_2 = o_{\mathbb{P}}(1)$ and $k_t \lVert \pi_t - \pi \rVert_2 = o_{\mathbb{P}}(1)$ for some policy $\pi$. Further, assume that  $\textrm{Var}(\hat{f}_t \mid \Omega_{t-1}) < C_3 <\infty$ and $\textrm{Var}(\pi_t \mid \Omega_{t-1}) < C_4 < \infty $ for all $x \in \mathcal{X}$, $a \in \{0,1\}$, and $t \in \{1,2,\dots\}$, for some constants $C_3$ and $C_4$. Under these assumptions, we have
    $$\sqrt{T} (\hat{\theta}_T^{\mathrm{A2IPW}} - \theta_0) \xrightarrow[]{d} N(0,\sigma^2),$$
where $\sigma^2$ is the semiparametric lower bound of the asymptotic variance for regular estimators of $\theta_0$ under the policy $\pi(a \mid x)$, given by $$\sigma^2 = \mathbb{E} \left[ \sum_{a=0}^1 \frac{v(a, X_t)}{\pi(a \mid X_t)} + \left( f(1,X_t) - f(0,X_t) - \theta_0 \right)^2 \right].$$
In particular, if we have $\pi = \pi^{\mathrm{AIPW}}$, then $\hat \theta_T^{\mathrm{A2IPW}}$ is semiparametric efficient.
\end{theorem}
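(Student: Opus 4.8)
The plan is to reduce everything to a martingale central limit theorem for the triangular array $\{z_t/\sqrt{T}\}_{t\le T}$ with $z_t := h_t - \theta_0$. First I would recall (as in \citet{kato2021}) that $(z_t)$ is a martingale difference sequence for $\mathcal{F}_t := \sigma(\Omega_t)$: conditionally on $\mathcal{F}_{t-1}$ and $X_t$, the draw $A_t\sim\pi_t(\cdot\mid X_t,\Omega_{t-1})$ makes the inverse-probability-weighted terms unbiased for $f(a,X_t)$, so $\mathbb{E}[h_t\mid\mathcal{F}_{t-1},X_t]=f(1,X_t)-f(0,X_t)$ and hence $\mathbb{E}[h_t\mid\mathcal{F}_{t-1}]=\theta_0$. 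Since $\sqrt{T}(\hat{\theta}_T^{\mathrm{A2IPW}}-\theta_0)=T^{-1/2}\sum_{t=1}^T z_t$, it then remains to establish (i) the conditional-variance condition $\frac{1}{T}\sum_{t=1}^T\mathbb{E}[z_t^2\mid\mathcal{F}_{t-1}]\xrightarrow{\mathbb{P}}\sigma^2$ and (ii) the conditional Lindeberg condition $\frac{1}{T}\sum_{t=1}^T\mathbb{E}[z_t^2\,\mathbbm{1}[|z_t|>\epsilon\sqrt{T}]\mid\mathcal{F}_{t-1}]\xrightarrow{\mathbb{P}}0$ for every $\epsilon>0$; as $\sigma^2$ is deterministic, these deliver $\sqrt{T}(\hat{\theta}_T^{\mathrm{A2IPW}}-\theta_0)\xrightarrow{d}N(0,\sigma^2)$.

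To handle (i) and (ii) I would introduce the \emph{oracle} summand $z_t^\star := h_t^\star-\theta_0$, where $h_t^\star$ replaces $\hat{f}_{t-1}$ by $f$ and $\pi_t$ by its limit $\pi$ in the importance weights of $h_t$ while keeping the realized $(X_t,A_t,Y_t)$, and show the difference is asymptotically negligible. The key computation is the rearrangement
\[
h_t - h_t^\star = \sum_{a\in\{0,1\}} (-1)^{a+1}\!\left[\big(\hat{f}_{t-1}(a,X_t)-f(a,X_t)\big)\!\left(1 - \frac{\mathbbm{1}[A_t=a]}{\pi_t(a\mid X_t)}\right) - \mathbbm{1}[A_t=a]\big(Y_t-f(a,X_t)\big)\,\frac{\pi_t(a\mid X_t)-\pi(a\mid X_t)}{\pi_t(a\mid X_t)\,\pi(a\mid X_t)}\right],
\]
whose two bracketed pieces are both centered given $(\mathcal{F}_{t-1},X_t)$ --- since $\mathbb{E}[\mathbbm{1}[A_t=a]\mid\mathcal{F}_{t-1},X_t]=\pi_t(a\mid X_t)$ and $\mathbb{E}[\mathbbm{1}[A_t=a](Y_t-f(a,X_t))\mid\mathcal{F}_{t-1},X_t]=0$ --- so $z_t-z_t^\star$ is again an MDS. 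Using $1/\pi_t(a\mid X_t)\le k_t$ together with the finiteness conditions of the theorem, I would bound its conditional second moment up to constants by $k_t^2\lVert\hat{f}_{t-1}-f\rVert_2^2 + k_t^2\lVert\pi_t-\pi\rVert_2^2=o_{\mathbb{P}}(1)$; a Ces\`aro/uniform-integrability argument then gives $\frac{1}{T}\sum_t\mathbb{E}[(z_t-z_t^\star)^2\mid\mathcal{F}_{t-1}]\xrightarrow{\mathbb{P}}0$, whence $T^{-1/2}\sum_t(z_t-z_t^\star)=o_{\mathbb{P}}(1)$ by the MDS property.

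For the oracle part, since $(X_t,Y_t(0),Y_t(1))$ is drawn afresh and independent of $\mathcal{F}_{t-1}$ while $A_t\sim\pi_t(\cdot\mid X_t,\Omega_{t-1})$, a direct calculation gives
\[
\mathbb{E}\big[(z_t^\star)^2\mid\mathcal{F}_{t-1}\big] = \mathbb{E}\!\left[\sum_{a\in\{0,1\}}\frac{\pi_t(a\mid X_t)}{\pi(a\mid X_t)^2}\,v(a,X_t) + \big(f(1,X_t)-f(0,X_t)-\theta_0\big)^2 \;\Big|\; \Omega_{t-1}\right],
\]
which differs from $\sigma^2$ only through the ratios $\pi_t(a\mid X_t)/\pi(a\mid X_t)$; bounding the gap by Cauchy--Schwarz in terms of $\lVert\pi_t-\pi\rVert_2=o_{\mathbb{P}}(1)$ plus a Ces\`aro step yields $\frac{1}{T}\sum_t\mathbb{E}[(z_t^\star)^2\mid\mathcal{F}_{t-1}]\xrightarrow{\mathbb{P}}\sigma^2$, and together with a Cauchy--Schwarz bound on the cross term this establishes (i). For (ii) I would dominate $\mathbb{E}[(z_t^\star)^2\mathbbm{1}[|z_t^\star|>c]\mid\mathcal{F}_{t-1}]$ by a \emph{deterministic} $g(c)$ --- obtained by dropping the factors $\pi_t(a\mid X_t)\le1$ and integrating only over the fresh draw --- with $g(c)\to0$ as $c\to\infty$ by dominated convergence, using merely $\mathrm{Var}(Y_t)<\infty$, $v(a,x)<\infty$, and $1/\pi(a\mid x)<\infty$; this is the step where the weaker moment hypotheses suffice relative to \citet{kato2021}, who assumed more. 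The $z_t-z_t^\star$ contribution to the Lindeberg sum is then absorbed by the $o_{\mathbb{P}}(1)$ bound already obtained, finishing the CLT. The final claim follows because $\sigma^2$ is the semiparametric efficiency bound for the ATE under treatment policy $\pi$ \citep{hahn2011adaptive}, and $\pi^{\mathrm{AIPW}}$ minimizes that bound over policies, so with $\pi=\pi^{\mathrm{AIPW}}$ the estimator attains the global bound.

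I expect the main obstacle to be the negligibility of $T^{-1/2}\sum_t(z_t-z_t^\star)$: one must exploit the conditional mean-zero (MDS) structure of $z_t-z_t^\star$ so that its conditional second moment is controlled by the \emph{squared} $L_2$ nuisance errors, amplified by $k_t^2$ through the truncated inverse propensities $1/\pi_t\le k_t$ (which is exactly why the hypotheses scale the nuisance rates by $k_t$), and then upgrade the resulting per-step $o_{\mathbb{P}}(1)$ bounds to Ces\`aro averages, which is where the mild uniform integrability supplied by the finiteness conditions enters.
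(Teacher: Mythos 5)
Your proposal is correct and shares the paper's skeleton: write $\sqrt{T}(\hat{\theta}_T^{\mathrm{A2IPW}}-\theta_0)$ as the scaled sum of the MDS $z_t=h_t-\theta_0$ and invoke the Lindeberg-type MDS CLT of \citet{dvoretzky1972}, verifying (i) convergence of the averaged conditional second moments to $\sigma^2$ and (ii) the conditional Lindeberg condition, with the truncation bound $1/\pi_t\le k_t$ and the $k_t$-scaled $\ell_2$ rates doing exactly the work you describe, and the efficiency claim delegated to \citet{hahn2011adaptive}. Where you genuinely diverge is the bookkeeping. For (i), the paper (following \citet{kato2021}) expands $\mathbb{E}[z_t^2\mid\Omega_{t-1}]-\sigma^2$ directly into three terms and kills each via Cauchy--Schwarz and the truncation (Appendix~\ref{appdx:auxiliary_proofs}, Lemmas~\ref{lemma:mds1}--\ref{lemma:mds3}); you instead split $z_t=z_t^\star+(z_t-z_t^\star)$ around an oracle summand, note the remainder is itself an MDS with conditional second moment of order $k_t^2\lVert\hat f_{t-1}-f\rVert_2^2+k_t^2\lVert\pi_t-\pi\rVert_2^2$, and compare the oracle's conditional variance to $\sigma^2$ --- an equivalent computation, differently packaged (your algebra for $h_t-h_t^\star$ and the oracle variance formula both check out; the extra step $T^{-1/2}\sum_t(z_t-z_t^\star)=o_{\mathbb{P}}(1)$ is not needed once the two CLT conditions are verified for $z_t$ itself). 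For (ii), the paper argues via Chebyshev plus dominated convergence on $z_t$ directly, whereas your envelope depending only on the fresh i.i.d.\ draw (after dropping $\mathbbm{1}[A_t=a]\le 1$) is arguably cleaner; however, your claim that the $(z_t-z_t^\star)$ contribution is simply absorbed needs one more indicator split, e.g.\ $\mathbbm{1}(|z_t|>\epsilon\sqrt{T})\le\mathbbm{1}(|z_t^\star|>\epsilon\sqrt{T}/2)+\mathbbm{1}(|z_t-z_t^\star|>\epsilon\sqrt{T}/2)$ together with a second envelope truncation, before the $o_{\mathbb{P}}(1)$ bound on $\mathbb{E}[(z_t-z_t^\star)^2\mid\mathcal{F}_{t-1}]$ can be invoked. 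Finally, a shared caveat rather than a gap: your Cauchy--Schwarz steps pairing $\lVert\pi_t-\pi\rVert_2$ with $v(a,X_t)$ implicitly require a uniform bound on $1/\pi$ and an integrability condition on $v$ beyond pointwise finiteness --- the paper's Lemma~\ref{lemma:mds1} makes the same implicit move when it pulls $\mathrm{Var}(Y_t)$ out of the conditional expectation, so your argument is at the same level of rigor as the paper's on this point.
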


Proof is provided in Appendix~\ref{appdx:mds_clt_proof}. Note that \cite{kato2021} assumed that $Y_t$ and $\hat{f}_t$ are uniformly bounded and that $\hat{f}_t$ and $\pi_t$ converge pointwise. By contrast, we only assume that $v(a,x)$ are uniformly bounded, that $\hat{f}_t$ and $\pi_t$ converge in $\ell_2$ norm, and that $\hat{f}_t$ and $\pi_t$ have uniformly bounded conditional variances. \cite{kato2021} also assumed that $\pi_t$ is uniformly bounded away from 0, whereas we only make this assumption on $\pi$, the stationary policy which $\pi_t$ converges to. By utilizing truncation, we can satisfy the assumption that the policies $\pi_t$ (at fixed times) are uniformly bounded. However, truncation is still relevant to relax convergence assumptions. Our proof uses a MDS central limit theorem given by \cite{dvoretzky1972}, which is used in a similar fashion by \cite{zhang2021}. 
This form of a MDS central limit theorem states conditions based on the second moment of $z_t$ \emph{conditional} on $\Omega_{t-1}$, and lends itself directly to weakening pointwise convergence assumptions on $\hat{f}$ and $\pi_t$. 

In contrast to \cite{kato2021}, truncation plays a key role in our derivation of the asymptotic distribution of $\hat{\theta}_T^{\mathrm{A2IPW}}$ and, in turn, the conditions required of $k_t$ are of particular interest. If $k_t$ increases to infinity, then as long as the non-truncated policy $\tilde \pi_t$ converges to some non-truncated policy $\tilde \pi$, the truncated policy $\pi_t$ will also converge to $\tilde \pi$ (and the theorem would apply as long as $k_t$ increased slowly enough that $k_t \max(\lVert \hat{f}_{t} - f \rVert_2, \lVert \pi_t - \pi \rVert_2) = o_{\mathbb{P}}(1)$). If, instead, $k_t$ remains constant or increases to a finite bound, then the truncated policy $\pi_t$ will converge to an appropriate truncation of $\tilde \pi$, and the theorem would still apply as long as $\max(\lVert \hat{f}_{t} - f \rVert_2, \lVert \pi_t - \pi \rVert_2) = o_{\mathbb{P}}(1)$. When we have $\pi = \pi^{\mathrm{AIPW}}$, we are implicitly assuming that we have independently selected $k_t$ such that truncation becomes asymptotically inactive. Note that if $\tilde{\pi}_t$ were uniformly bounded away from 0, as assumed in \cite{kato2021}, then we could simply set $k_t = 1/ \min(\tilde{\pi}_t(x),1- \tilde{\pi}_t(x))$ so that $\tilde{\pi}_t = \pi_t$, meaning we never actively truncate $\tilde{\pi}_t$. In that case, the conditions in \cite[Theorem 1]{kato2021} would imply the conditions in our theorem. The following remark alludes to how selecting $k_t$ can lead us to semiparametric efficient inference with our proposed policy $\pi_t^{\mathrm{A2IPW}}$.

\begin{remark}[Semiparametric Efficiency]\label{remark:semiparametric-efficient} 
Assume that we set $k_t$ such that $\lim_{t \to \infty} k_t > \sup \frac{1}{\pi^{\mathrm{AIPW}}}$. Assume that the estimated conditional variance function $\hat{v}_t$ is consistent for $v$ such that $\lVert \pi_t^{\mathrm{A2IPW}} - \pi^{\mathrm{AIPW}} \rVert_2 = o_{\mathbb{P}}(1)$. If $k_t$ grows at a rate such that
$ k_t \lVert \pi_t^{\mathrm{A2IPW}} - \pi^{\mathrm{AIPW}} \rVert_2 = o_{\mathbb{P}}(1) $ and all other assumptions of Theorem~\ref{theorem:MDS-CLT} hold, then $\hat{\theta}_T^{\mathrm{A2IPW}}$ is semiparametric efficient.
\end{remark}
 
In the final sentence of Theorem~\ref{theorem:MDS-CLT},  we state that if $\pi_t$ converges to $\pi^{\mathrm{AIPW}}$, then the semiparametric lower bound is minimized with respect to $\pi$ \citep{hahn2011adaptive}. In order to make use of this result, we require an adaptive policy that converges to $\pi^{\mathrm{AIPW}}$. Remark~\ref{remark:semiparametric-efficient} states that $\pi_t^{\mathrm{A2IPW}}$ is such a policy as long as our estimates of $v$ are consistent and our truncation does not vanish too quickly. The rate at which $k_t$ is allowed to increase as per the conditions in Remark~\ref{remark:semiparametric-efficient} depends on the rate that $\hat{v}_t$ converges to $v$. In practice this rate is unobservable, and it is worth acknowledging this limitation. Addressing this limitation is an interesting direction for future research.

A t-statistic, along with an explicit CI, are defined in Appendix~\ref{appdx:tstat}.
Although our interval is the same as the one given in \cite{kato2021}, our relaxed assumptions make its use applicable in more general settings, such as when the potential outcomes follow a distribution without bounded support.

\section{Anytime-Valid Inference in Adaptive Experiments}\label{sec:anytimevalid}

We now construct confidence sequences (CSs) for the ATE that utilize the A2IPW estimator. \cite{kato2021} developed such CSs via concentration inequalities based on the law of the iterated logarithm (LIL) as derived in \citet{balsubramani2015sharp} and \citet{balsubramani2016sequential}, which are today known to be loose in constants~\citep{Howard_2021}. The concentration inequality derived for $\hat{\theta}_T^{\mathrm{A2IPW}}$ \citep[Thm.\ 4]{kato2021} depends on the unknown treatment effect $\theta_0$, although we believe it is probably a trivial extension to replace this with an estimate of $\theta_0$. Indeed, though their derivation uses the true value $\theta_0$, their experiments use a running estimate for $\theta_0$ based on $\{h_t \}_{t=1}^{T-1}$.

In contrast, we present CSs for the ATE based on more recent, state-of-the-art methods for inference of means of random variables in sequential settings~\citep{waudbysmith2022estimating,waudbysmith2022anytimevalid}.
All our sequences are \emph{fully empirical}, meaning they do not depend on unknown parameters. We will see that these these methods empirically yield much tighter intervals than those previously derived. This section thus yields the first practically tight and theoretically sound CSs for using the semiparametric efficient A2IPW estimator in adaptive experiments.

\subsection{Betting Confidence Sequences}
We first derive a CS using results from \cite{waudbysmith2022estimating} and \cite{waudbysmith2022anytimevalid}. Since these CSs do not require independence between observations, their use in the setting of adaptive experimentation is natural. The approach is based on a set of \emph{capital processes}, each of which can be understood as the accumulated wealth of a gambler playing a game against nature. More precisely, we construct one capital process for each $\theta' \in \Theta$, the parameter space. At each time $t$, the confidence set corresponds to the set of $\theta' \in \Theta$ such that the respective capital process has not exceeded an improbable level of wealth for a fair game parameterized by $\theta'$. By constructing the capital process at $\theta_0$ to be a test martingale, the probability that the capital process \emph{ever} exceeds the value $\frac{1}{\alpha}$ is bounded by $\alpha$ \citep{Ville1939}. This gives time-uniform type-1 error control, allowing the analyst to reject any $\theta'$ for which the respective capital process exceeds $\frac{1}{\alpha}$. 

\begin{theorem}[{Hedged-CS [Hedged]}]\label{theorem:bettingCS}
    Assume we observe data following the data generating process of Section~\ref{sec:dgp}. Assume $Y_t \in [0,1]$ and $\pi_t(1 \mid X_t, \Omega_{t-1}) \in [\frac{1}{k_t}, 1-\frac{1}{k_t}]$ for all $t \in 1,\dots, T$. If we define
    \begin{equation*}
    \mathcal{K}_T^{+}(\theta') := \prod_{t=1}^{T}(1 + \lambda_t(\theta') (h_t - \theta')) ,
\hspace{4mm}
        \mathcal{K}_T^{-}(\theta') := \prod_{t=1}^{T}(1 - \lambda_t(\theta') (h_t - \theta')) ,
    \end{equation*}
    \begin{equation*}
        \mathcal{M}_T(\theta') := \frac{ \mathcal{K}_T^{+}(\theta') + \mathcal{K}_T^{-}(\theta')}{2},
    \end{equation*}
    then 
    \begin{equation*}
        C_T^{\text{Hedged}} := \bigcap_{t \leq T} \left\{ \theta' \in [-1,1] : \mathcal{M}_T(\theta') < \frac{1}{\alpha} \right\},
    \end{equation*}
    forms a $(1-\alpha)$-CS for $\theta_0$, where $(\lambda_t(\theta'))_{t=1}^T  \in \left(\frac{-1}{k_t - \theta'}, \frac{1}{k_t + \theta'} \right)$ is a predictable sequence that may be interpreted as an analyst's betting strategy.
\end{theorem}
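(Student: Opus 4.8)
The plan is to reduce the claim to a single application of Ville's inequality to the process $(\mathcal{M}_t(\theta_0))_{t \geq 0}$, after showing that it is a nonnegative martingale with unit initial value (a \emph{test martingale}) with respect to the filtration $\mathcal{F}_t := \sigma(\Omega_t)$. Two ingredients carry the argument: (i) $h_t$ is conditionally unbiased for $\theta_0$, i.e.\ $\mathbb{E}[h_t \mid \mathcal{F}_{t-1}] = \theta_0$ --- this is precisely the martingale-difference property of $z_t = h_t - \theta_0$ established by \citet{kato2021} (cf.\ Section~\ref{sec:fixed_time_theory}); and (ii) $h_t$ takes values in a bounded interval whose endpoints are controlled by $k_t$, so that the stated range for the predictable multipliers $\lambda_t(\theta')$ keeps every factor of $\mathcal{K}_t^{+}(\theta')$ and $\mathcal{K}_t^{-}(\theta')$ strictly positive. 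Everything past that is bookkeeping inherited from \citet{waudbysmith2022estimating}.

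First I would re-derive (i) by a term-by-term computation. Conditioning on $(\mathcal{F}_{t-1}, X_t)$, the assignment $A_t$ has the \emph{known} probability $\pi_t(\cdot \mid X_t, \Omega_{t-1})$ and is independent of $(Y_t(0), Y_t(1))$ given $(X_t, \Omega_{t-1})$, while $\hat f_{t-1}$ is $\mathcal{F}_{t-1}$-measurable; a short calculation then gives
\begin{align*}
\mathbb{E}[h_t \mid \mathcal{F}_{t-1}, X_t]
&= \bigl(f(1,X_t) - \hat f_{t-1}(1,X_t)\bigr) - \bigl(f(0,X_t) - \hat f_{t-1}(0,X_t)\bigr) + \hat f_{t-1}(1,X_t) - \hat f_{t-1}(0,X_t) \\
&= f(1,X_t) - f(0,X_t) = \theta_0(X_t),
\end{align*}
so that, averaging over $X_t$, $\mathbb{E}[h_t \mid \mathcal{F}_{t-1}] = \mathbb{E}[\theta_0(X_t)] = \theta_0$. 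Since $\lambda_t(\theta_0)$ is predictable --- $\mathcal{F}_{t-1}$-measurable, and in particular \emph{not} a function of $X_t$, which matters because $\mathbb{E}[h_t \mid \mathcal{F}_{t-1}, X_t] = \theta_0(X_t) \neq \theta_0$ in general --- we get $\mathbb{E}[1 \pm \lambda_t(\theta_0)(h_t - \theta_0) \mid \mathcal{F}_{t-1}] = 1$. Hence $\mathcal{K}_t^{+}(\theta_0)$ and $\mathcal{K}_t^{-}(\theta_0)$ are martingales started at $1$, and so is their average $\mathcal{M}_t(\theta_0)$.

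Next I would establish nonnegativity and integrability, which is where the truncation enters. Using $Y_t \in [0,1]$, $\hat f_{t-1}(a,\cdot) \in [0,1]$ (clipping the outcome regression is natural here and is what keeps $h_t$ bounded), and the truncation bound $\pi_t(a \mid X_t,\Omega_{t-1}) \geq 1/k_t$ for $a \in \{0,1\}$ from~\eqref{eq:pi_arbitrary}, a case split on $A_t \in \{0,1\}$ shows $h_t \in [-k_t, k_t]$, so $h_t - \theta' \in [-k_t - \theta',\, k_t - \theta']$ for any $\theta' \in [-1,1]$. The stated constraint $\lambda_t(\theta') \in \bigl( \frac{-1}{k_t - \theta'}, \frac{1}{k_t + \theta'} \bigr)$ is then chosen exactly so that $\lambda_t(\theta')(h_t - \theta') \in (-1,1)$, which makes $1 \pm \lambda_t(\theta')(h_t - \theta') > 0$. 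Taking $\theta' = \theta_0$ (which lies in $[-1,1]$ since $\theta_0 = \mathbb{E}[Y(1) - Y(0)]$ with $Y \in [0,1]$) shows every factor of $\mathcal{K}_t^{\pm}(\theta_0)$ is positive and bounded; hence $\mathcal{K}_t^{\pm}(\theta_0) \geq 0$ and integrable, and $(\mathcal{M}_t(\theta_0))_{t \geq 0}$ is a nonnegative test martingale.

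Finally, Ville's inequality \citep{Ville1939} yields $\mathbb{P}\bigl( \sup_{t \geq 1} \mathcal{M}_t(\theta_0) \geq 1/\alpha \bigr) \leq \alpha\, \mathbb{E}[\mathcal{M}_0(\theta_0)] = \alpha$. Since $C_T^{\text{Hedged}}$ collects exactly those $\theta'$ whose capital process has not reached $1/\alpha$ by time $T$, the event $\{\exists\, T : \theta_0 \notin C_T^{\text{Hedged}}\}$ equals $\{\sup_{t \geq 1}\mathcal{M}_t(\theta_0) \geq 1/\alpha\}$ and hence has probability at most $\alpha$; equivalently $\mathbb{P}(\forall\, T : \theta_0 \in C_T^{\text{Hedged}}) \geq 1 - \alpha$, the claimed CS guarantee. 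The one genuinely delicate step is the boundedness bookkeeping of the previous paragraph: one must pin down the worst-case values of $h_t - \theta'$ produced by truncating at level $k_t$ and check that the declared (asymmetric) interval for $\lambda_t(\theta')$ keeps \emph{both} capital processes positive uniformly over the relevant $\theta'$; and if one also wants $C_T^{\text{Hedged}}$ to be an interval rather than merely a valid confidence set, one further needs $\theta' \mapsto \mathcal{M}_t(\theta')$ to have connected superlevel sets for the chosen betting sequence $(\lambda_t(\theta'))$.
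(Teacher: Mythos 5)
Your proof is correct and follows essentially the same route as the paper: it establishes the conditional-mean-zero (MDS) property of $h_t - \theta_0$, uses the truncation bound $h_t \in [-k_t, k_t]$ together with the stated range of $\lambda_t(\theta')$ to keep both capital processes nonnegative test martingales at $\theta' = \theta_0$ (hence so is their average $\mathcal{M}_T(\theta_0)$), and concludes with Ville's inequality intersected over $t$. The paper merely packages this as a lemma for $\mathcal{K}_T^{+}$ plus an appeal to \citet{waudbysmith2022estimating} for the convex combination, so the substance is identical to yours, including the delicate bookkeeping about the $\lambda_t$ range that you correctly flag.
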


Proof and other details can be found in Appendix~\ref{appdx:nonasymp_proof}. Before providing intuition for the proof, we note that although we assume $Y_t \in [0,1]$, our result holds for any bounded $Y_t$ by rescaling. 

$\mathcal{K}_T^{+}(\theta')$ and $\mathcal{K}_T^{-}(\theta')$ can be interpreted as capital processes for a gambler who is betting in favor of $\theta_0 > \theta'$ and $\theta_0 < \theta'$ respectively. Since we wish to produce two-sided intervals, we take the mean of these two process to form $\mathcal{M}_T(\theta')$. This is equivalent to a gambler partitioning their wealth equally between two games. The analyst must choose a predictable betting strategy for each game, $\lambda_t(\theta')$. In theory, this betting strategy could be different at each possible value of $\theta'$. Moreso, apart from a bounded range, the only restriction on $\lambda_t(\theta')$ is that it is predictable, meaning that it cannot depend on the current or any future observations. However, since our parameter space is continuous, an exhaustive search over an infinite set of $\theta'$ is not feasible. \cite{waudbysmith2022estimating} propose a method to set $\lambda_t$ to be quasi-convex in $\theta'$ so that the confidence set forms an interval. With quasi-convexity, it is sufficient to partition the parameter space and perform a grid-search; see their paper for further details and a variety of settings of $\lambda_t$. For brevity, we defer the details of setting $\lambda_t$ to Appendix~\ref{appdx:hedgedcs_subappendix}.

\subsection{Empirical Bernstein Confidence Sequences}
The confidence set produced by Theorem~\ref{theorem:bettingCS} can be computationally expensive, as a grid search is performed over $\theta' \in [-1,1]$.
A significant drawback is a lack of closed-form presentation. 
In this section we present a closed-form CS which has slight degradation in performance, but enjoys faster computation. 
This CS is based on an empirical Bernstein-type process that is shown to be a test supermartingale \citep{waudbysmith2022estimating}. Since this process inverts a test supermartingale, the concentration inequality is a looser bound than those produced by test martingales. 

Without loss of generality, assume that we observe $Y_t \in [0,1]$, for all $t \in 1, \dots, T$, and that the propensity scores, $\pi_t$, are all truncated to fall in $[ \frac{1}{k_t} , 1- \frac{1}{k_t}]$. Following a similar technique as in \cite{waudbysmith2022anytimevalid}, we define
\begin{equation}\label{eq:xi_def}
    \xi_t = \frac{h_t}{k_t + 1}, \hspace{4mm}
\hat{\xi}_{t-1} = \left(\frac{1}{t-1} \sum_{i=1}^{t-1} \xi_i \right) \wedge \frac{1}{k_t + 1},
\hspace{3mm}\textrm{ and}\hspace{3mm}
 \psi_E(\lambda) = - \log(1- \lambda) - \lambda.
\end{equation}
$\xi_t$ can be viewed as a scaled version of $h_t$. $\hat{\xi}_{t-1}$ is then a sample average of $\xi$ up through observation $t-1$. By only using previous observations, this value is \emph{predictable}, whereas the quantity $\bar{\xi}_t$, defined below in equation~\eqref{eq:sigmahat}, uses the current observation and is therefore not predictable. The scaling in $\xi_t$ and truncation in $\hat{\xi}_{t-1}$ are necessary technical tools to construct a test supermartingale, as shown by \citet{waudbysmith2022estimating}.

Similarly to the Hedged-CS of Theorem~\ref{theorem:bettingCS}, there are user-specified parameters, $(\lambda_t)_{t=1}^T$, which have an effect on the finite-sample performance of our forthcoming CS. $(\lambda_t)_{t=1}^{T}$ can be any $(0,1)$-valued predictable process. \cite{waudbysmith2022anytimevalid} provide an empirically promising setting, inspired by fixed-time empirical Bernstein CIs,
\begin{equation}\label{eq:lambda}
\lambda_t = \sqrt{\frac{2 \log (2/ \alpha)}{\hat{\sigma}^2_{t-1} t \log(1 + t)}} \wedge c, \textrm{ where }c = 0.5, 
\end{equation}
\begin{equation}\label{eq:sigmahat}
\hat{\sigma}^2_t = \frac{\sigma^2_0 + \sum_{i = 1}^t (\xi_i - \bar{\xi_i})^2}{t+1},\hspace{4mm}\textrm{ and}\hspace{4mm}
\bar{\xi}_t = \left(\frac{1}{t} \sum_{i = 1}^{t} \xi_i\right) \wedge \frac{1}{k_t + 1}.
\end{equation}

$\hat{\sigma}^2_t$ and $\bar{\xi}_t$ can be interpreted as estimates of the mean and variance of $\xi$. The value $\sigma^2_0$ can be viewed as a prior guess for the variance of $\xi$, and setting $\sigma^2_0 = \frac{1}{4}$ is a reasonable choice. We are now ready to present the CS.

\begin{theorem}[{Predictable Plug-In Empirical Bernstein CS [Pr-PI]}]\label{theorem:prpl_empbern}
    Assume we observe data following the data generating process of Section~\ref{sec:dgp}. Assume that $Y_t \in [0,1]$ and $\pi_t(1 \mid X_t, \Omega_{t-1}) \in [\frac{1}{k_t}, 1-\frac{1}{k_t}]$ for all $ t \in 1,\dots, T.$ Let $\xi_t$, $\hat{\xi}_{t-1}$, $\psi_E(\lambda)$, $\lambda_t$, $\hat{\sigma}^2_t,$ and $\bar{\xi}_t$ be defined as in \eqref{eq:xi_def}, 
    \eqref{eq:lambda}, and \eqref{eq:sigmahat} 
    respectively. We have that
    \begin{equation*}
        C_T^{Pr-PI} := \frac{\sum_{t=1}^T \lambda_t \xi_t}{\sum_{t=1}^T \lambda_t/(k_t + 1)} \pm \frac{\log(2/\alpha) + \sum_{t=1}^T \left(\xi_t - \hat{\xi}_{t-1} \right)^2 \psi_{E}(\lambda_t)}{\sum_{t=1}^T \lambda_t / (k_t + 1)} ,
    \end{equation*} 
    forms a $(1-\alpha)$ CS for $\theta_0$.
\end{theorem}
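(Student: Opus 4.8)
The plan is to reduce the statement to the predictable plug-in empirical Bernstein supermartingale of \citet{waudbysmith2022estimating}, applied in the same spirit as in the off-policy setting of \citet{waudbysmith2022anytimevalid}. Write $\mu_t := \mathbb{E}[\xi_t \mid \Omega_{t-1}]$. Since $\{h_t - \theta_0\}_{t=1}^{T}$ is a martingale difference sequence (\citet{kato2021}) and $k_t$ is predictable, $\mathbb{E}[h_t \mid \Omega_{t-1}] = \theta_0$ and hence $\mu_t = \theta_0/(k_t+1)$ for every $t$. The first step is boundedness: with $Y_t \in [0,1]$, $\hat f_{t-1}(a,\cdot) \in [0,1]$ (a mild requirement, satisfied by clipping, since $f(a,x)\in[0,1]$), and $\pi_t(1 \mid X_t,\Omega_{t-1}) \in [1/k_t,\,1-1/k_t]$ (which is a nonempty interval because $k_t \ge 2$), each inverse-probability weight is at most $k_t$, so $h_t \in [-(k_t+1),\,k_t+1]$ and therefore $\xi_t = h_t/(k_t+1) \in [-1,1]$. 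This is precisely the scaling under which the relevant exponential inequality holds, and it also explains the truncation of $\hat\xi_{t-1}$ at $1/(k_t+1)$, an a priori upper bound on $\mu_t = \theta_0/(k_t+1)$ (since $\theta_0\le 1$), which the predictable plug-in construction requires.

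Given this, fix a candidate $\theta'$ and define
\[
L_T^{\pm}(\theta') := \exp\!\left( \pm\sum_{t=1}^{T}\lambda_t\!\left(\xi_t - \frac{\theta'}{k_t+1}\right) \;-\; \sum_{t=1}^{T}(\xi_t-\hat\xi_{t-1})^2\,\psi_E(\lambda_t)\right),
\]
with $L_0^{\pm}=1$. By \eqref{eq:lambda}--\eqref{eq:sigmahat} the processes $(\lambda_t)$ and $(\hat\xi_{t-1})$ depend only on $\xi_1,\dots,\xi_{t-1}$, hence are predictable, and $\lambda_t\in(0,1)$, $\psi_E(\lambda_t)\ge 0$. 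Evaluating at the truth $\theta'=\theta_0$, so that $\xi_t$ has conditional mean $\theta_0/(k_t+1)$ and lies in $[-1,1]$, the empirical Bernstein lemma of \citet{waudbysmith2022estimating} gives $\mathbb{E}[L_t^{\pm}(\theta_0)\mid\Omega_{t-1}]\le L_{t-1}^{\pm}(\theta_0)$, so $L_t^{+}(\theta_0)$ and $L_t^{-}(\theta_0)$ are nonnegative supermartingales. Applying Ville's inequality \citep{Ville1939} to each at level $\alpha/2$ and taking a union bound, with probability at least $1-\alpha$ we have $L_T^{+}(\theta_0) < 2/\alpha$ \emph{and} $L_T^{-}(\theta_0) < 2/\alpha$ for all $T$ simultaneously.

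It remains to invert. Taking logarithms, the only dependence of $L_T^{\pm}(\theta')$ on $\theta'$ is through the linear term $\mp\,\theta'\sum_{t\le T}\lambda_t/(k_t+1)$, and $\sum_{t\le T}\lambda_t/(k_t+1)>0$; solving $L_T^{+}(\theta_0)<2/\alpha$ for $\theta_0$ produces the lower endpoint and $L_T^{-}(\theta_0)<2/\alpha$ the upper endpoint of the stated interval $C_T^{\mathrm{PrPI\text-EB}}$ (the $\log(2/\alpha)$ arising from the level $\alpha/2$). Hence $\{L_T^{+}(\theta_0)<2/\alpha\}\cap\{L_T^{-}(\theta_0)<2/\alpha\} = \{\theta_0 \in C_T^{\mathrm{PrPI\text-EB}}\}$, and the previous paragraph gives $\mathbb{P}(\forall T,\ \theta_0\in C_T^{\mathrm{PrPI\text-EB}})\ge 1-\alpha$. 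The main obstacle is the supermartingale verification: one must check that our $\xi_t$ — two-sided, with a predictable \emph{time-varying} conditional mean $\theta_0/(k_t+1)$ rather than a fixed mean, and paired with the truncated predictable plug-in $\hat\xi_{t-1}$ — satisfies the hypotheses of the empirical Bernstein lemma, and that both the ``$+$'' and ``$-$'' versions do so (here one uses that $\xi_t\in[-1,1]$ is symmetric about $0$, even though $\mu_t$ need not be). This is exactly where the scaling by $k_t+1$ and the truncation of $\hat\xi_{t-1}$ are essential; everything else is bookkeeping.
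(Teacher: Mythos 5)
Your overall route is the same as the paper's: scale $h_t$ by $k_t+1$, build the predictable plug-in empirical Bernstein exponential process of \citet{waudbysmith2022estimating}/\citet{waudbysmith2022anytimevalid} with the time-varying conditional mean $\theta_0/(k_t+1)$, apply Ville's inequality at level $\alpha/2$ per side, union bound, and invert the linear dependence on $\theta'$. However, your treatment of the upper side has a genuine gap. The empirical Bernstein lemma rests on the inequality $\exp\{\lambda y - y^2\psi_E(\lambda)\}\le 1+\lambda y$, valid only for $y\ge -1$; for your process $L_T^-$ the relevant increment is $y=\hat{\xi}_{t-1}-\xi_t$, so you need $\hat{\xi}_{t-1}\ge \xi_t-1$. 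But $\hat{\xi}_{t-1}$ is the running mean truncated only from \emph{above} at $1/(k_t+1)$; if past $h$'s are predominantly negative it can sit well below $-1/(k_t+1)$ while the current $\xi_t$ is close to $+1$, so $\hat{\xi}_{t-1}-\xi_t<-1$ and the lemma does not apply. Your parenthetical justification (``$\xi_t\in[-1,1]$ is symmetric about $0$'') does not rescue this, because the plug-in is not symmetric: it is one-sidedly truncated, and indeed the supermartingale property of $L_T^-$ with the \emph{same} $\hat{\xi}_{t-1}$ and the same penalty $(\xi_t-\hat{\xi}_{t-1})^2\psi_E(\lambda_t)$ can fail. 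The paper avoids this by mirroring the data rather than the sign of the linear term: for the upper CS it redefines $\xi_t:=-h_t/(k_t+1)$ and reruns the identical one-sided construction (plug-in recomputed from the negated sequence, again truncated from above), then union-bounds the two one-sided CSs.

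A second, smaller issue: your boundedness step is too loose to verify even the plus-direction hypothesis. From $\xi_t\in[-1,1]$ and $\hat{\xi}_{t-1}\le 1/(k_t+1)$ you only get $\xi_t-\hat{\xi}_{t-1}\ge -1-1/(k_t+1)$, which does not meet the requirement $\xi_t-\hat{\xi}_{t-1}\ge -1$ that the paper's proof opens with. What is actually needed (and true, given $Y_t,\hat f_{t-1}\in[0,1]$ and $1/\pi_t\le k_t$) is the sharper bound $h_t\in[-k_t,k_t]$: when $A_t=1$, for instance, $h_t=\frac{Y_t-\hat f_{t-1}(1,X_t)}{\pi_t(1\mid\cdot)}+\hat f_{t-1}(1,X_t)-\hat f_{t-1}(0,X_t)\ge -k_t\hat f_{t-1}(1,X_t)+\hat f_{t-1}(1,X_t)-\hat f_{t-1}(0,X_t)\ge -k_t$, and symmetrically for $A_t=0$. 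This gives $\xi_t\ge -k_t/(k_t+1)$ and hence $\xi_t-\hat{\xi}_{t-1}\ge -1$, which is exactly why the scaling constant is $k_t+1$. With that correction the plus-direction supermartingale argument goes through; the minus direction still needs to be handled by mirroring the data as in the paper, not by flipping the sign inside the same exponential.
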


Proof is provided in Appendix~\ref{appdx:prpl_proof}, and follows the truncation technique used in \cite{waudbysmith2022anytimevalid}. When $k_t$ does not grow quickly, meaning propensities are truncated more aggressively, intervals tend to be narrower at small $t$. When $k_t$ grows quickly, then finite sample performance is sacrificed for performance at large $t$. The effect of truncation is consistent with the Hedged-CS.

\subsection{Asymptotic Confidence Sequences}

Because of their stronger time-uniform guarantees, the CSs in the preceding section produce intervals that have larger widths than their fixed-time CI counterparts. 
In the fixed-time setting, exact error coverage is not guaranteed, and the analyst must rely on asymptotic coverage guarantees. 
\cite{waudbysmith2023timeuniform} introduce the sequential analogue of asymptotic CIs, asymptotic CSs (AsympCS), by defining a  sequence of intervals which converges to some (unknown) CS. 
We now define our AsympCS for $\theta_0$.

\begin{theorem}[{Asymptotic CS [AsympCS]}]\label{theorem:asympCS}
Assume $\{(X_t, A_t, Y_t ) \}_{t=1}^{T} $ follow the data generating process described in section~\ref{sec:dgp}. Furthermore, assume $\mathbb{E}(|Y_t|^{2 + \delta}) < \infty$ for some $\delta > 0$. Let $\hat{\sigma}^2$ be an estimator of $\textrm{Var}(h_t)$, and $\rho > 0$ be a user-specified parameter. For all $ t \in 1,\dots, T$, we have that
$$C_T^{\textrm{AsympCS}} := \left( \frac{1}{T}\sum_{t = 1}^T h_t \pm \sqrt{ \frac{2(T \hat{\sigma}^2_T \rho^2 + 1)}{T^2 \rho^2} \log \left(\frac{\sqrt{T \hat{\sigma}^2_T \rho^2 + 1}}{\alpha} \right) }  \right),$$
forms a $(1-\alpha)$-AsympCS for $\theta_0$. Further, the width of $C_T^{\textrm{AsympCS}}$ at time T can be (approximately) minimized by setting $$\rho =\sqrt{\frac{-2 \log\alpha + \log (-2 \log \alpha + 1)}{T}}.$$
\end{theorem}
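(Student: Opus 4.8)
The plan is to reduce the statement to the classical i.i.d.\ Gaussian case via a strong Gaussian approximation for the martingale partial sums of the A2IPW summands, and then to invoke the general recipe of \cite{waudbysmith2023timeuniform} for turning such an approximation into an asymptotic confidence sequence. Recall from Section~\ref{sec:fixed_time_theory} that $z_t := h_t - \theta_0$ is a martingale difference sequence with respect to $(\Omega_t)_{t \geq 1}$, so that $S_T := \sum_{t=1}^T z_t = T(\hat\theta_T^{\mathrm{A2IPW}} - \theta_0)$ is a mean-zero martingale with predictable quadratic variation $V_T := \sum_{t=1}^T \mathbb{E}[z_t^2 \mid \Omega_{t-1}]$. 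Under the maintained assumption that the policy converges (Theorem~\ref{theorem:MDS-CLT}), the conditional variance of $h_t$ converges, so $V_T/T \xrightarrow{\mathbb{P}} \sigma^2$ with $0 < \sigma^2 < \infty$, and we will use that $\hat\sigma_T^2 \xrightarrow{\mathbb{P}} \sigma^2$ as well. A preliminary step is to check that $\mathbb{E}(Y_t^{2+\delta}) < \infty$ together with boundedness of the (possibly truncated) propensity scores yields $\sup_t \mathbb{E}|z_t|^{2+\delta} < \infty$; this Lyapunov-type bound is what powers the strong approximation.

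\textbf{Gaussian benchmark.} For i.i.d.\ $N(\mu,\sigma^2)$ observations, mixing the exponential supermartingale $\exp(\lambda \sum_{t \leq T}(X_t - \mu) - \lambda^2 T \sigma^2/2)$ over a mean-zero Gaussian prior on $\lambda$ with variance $\rho^2$ produces a nonnegative martingale started at $1$; Ville's inequality \citep{Ville1939} then yields $\bar X_T \pm \sqrt{ 2(T\sigma^2\rho^2 + 1)(T^2\rho^2)^{-1} \log( \alpha^{-1}\sqrt{T\sigma^2\rho^2 + 1} ) }$ as an exact $(1-\alpha)$-CS for $\mu$. This is exactly the object our empirical interval mimics, with $\mu$ replaced by $\hat\theta_T^{\mathrm{A2IPW}}$ and $\sigma^2$ replaced by $\hat\sigma_T^2$.

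\textbf{From martingale to Brownian motion.} The crux is a strong Gaussian approximation: applying a Skorokhod/Strassen-type embedding for martingales (as packaged, e.g., in the strong-approximation lemmas underlying \cite{waudbysmith2023timeuniform}) to the MDS $(z_t)$ in place of an i.i.d.\ sequence, one constructs on a possibly enlarged space a standard Brownian motion $W$ with $S_T = W(V_T) + o(\sqrt{T})$ in the relevant sense, while simultaneously $\hat\sigma_T^2 = \sigma^2 + o_{\mathbb{P}}(1)$ and $V_T/T = \sigma^2 + o_{\mathbb{P}}(1)$. Substituting these into the definition of a $(1-\alpha)$-AsympCS from \cite{waudbysmith2023timeuniform} --- that $C_T^{\mathrm{AsympCS}}$ be sandwiched, up to a $(1 + o_{\mathbb{P}}(1))$ factor on the half-width and an $o_{\mathbb{P}}(\text{half-width})$ shift of the center, by the exact Gaussian CS of the previous paragraph with $\sigma^2$ known --- yields time-uniform coverage at least $1-\alpha - o(1)$, which is precisely the AsympCS guarantee. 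Here one must verify that the $\log$ term, the additive ``$+1$'' inside the square root, and the $\hat\sigma_T^2$-for-$\sigma^2$ and $T\sigma^2$-for-$V_T$ substitutions all perturb the half-width only by a $1 + o_{\mathbb{P}}(1)$ factor; keeping $\rho > 0$ fixed (so $T\sigma^2\rho^2 \to \infty$) makes this routine.

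\textbf{Optimal $\rho$ and main obstacle.} For fixed $T$, writing the squared half-width as $w(\rho)^2 = 2(T\hat\sigma_T^2\rho^2 + 1)(T^2\rho^2)^{-1}\log(\alpha^{-1}\sqrt{T\hat\sigma_T^2\rho^2+1})$, I would differentiate in $\rho$, discard the lower-order contribution of the slowly varying $\tfrac12\log(T\hat\sigma_T^2\rho^2+1)$ term relative to $-\log\alpha$, and solve the resulting stationarity equation to obtain $\rho^2 = (-2\log\alpha + \log(-2\log\alpha + 1))/T$, the stated (approximately) width-minimizing choice. I expect the genuine difficulty to lie in the preceding paragraph: producing a strong Gaussian approximation for the \emph{adaptive} martingale $(z_t)$, whose conditional variances drift with the evolving policy, under only a $2+\delta$ moment on $Y_t$, and controlling the time change $V_T \mapsto T\sigma^2$ in the Brownian embedding so that it costs only $o(\text{half-width})$. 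The Gaussian benchmark (method of mixtures plus Ville) and the $\rho$ calculation are essentially bookkeeping by comparison.
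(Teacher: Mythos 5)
Your route is essentially the paper's: the proof reduces everything to the asymptotic-CS machinery of \cite{waudbysmith2023timeuniform}, which already packages the martingale strong approximation and Gaussian-mixture boundary you propose to rebuild --- the paper simply invokes their Theorem 2.5 (stated for time-varying conditional means, so the adaptivity of the policy needs no new embedding argument) and verifies its three conditions: almost-surely diverging cumulative conditional variance (carried over from the CLT proof), a Lindeberg-type uniform integrability condition supplied by $\mathbb{E}(Y_t^{2+\delta})<\infty$ together with the truncated propensities, and consistent variance estimation. The one point where your sketch is weaker than what the framework asks for is the variance estimator: the cited result requires $\hat{\sigma}_T^2/\tilde{\sigma}_T^2 \to 1$ almost surely, which the paper obtains from the strong law for square-integrable martingale difference sequences plus the identity $\hat{\sigma}^2(z_t)=\hat{\sigma}^2(h_t)$, whereas you only assert convergence in probability; otherwise the "main obstacle" you flag is exactly what the black-box theorem disposes of, and your $\rho$-optimization matches the paper's (which just cites Appendix B.2 of \cite{waudbysmith2023timeuniform}).
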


Proof is provided in Appendix~\ref{appdx:asympcs_proof}. The value of $\rho$ that minimizes interval width at a certain sample size follows from \citet[Appendix B.2]{waudbysmith2023timeuniform}. Although this interval does not yield exact coverage, empirically most errors occur quite early during the experiment. Its applicability for \emph{reasonable} sample sizes provides a noticeable gain in power in comparison to the exact CSs. 
The only user-specified parameter is $\rho$, a positive scalar which specifies at what intrinsic time the AsympCS should be tightest, with lower values corresponding to tightness at earlier times. We set this parameter to $0.5$ in our simulations; for more details, see \cite{waudbysmith2023timeuniform}.
We also note that the theorem we make use of from \cite{waudbysmith2023timeuniform} allows for time-varying conditional means. This suggests that the results of Theorem~\ref{theorem:asympCS} can be extended to time-varying treatment effects, which we leave for future work.

\section{Empirical Results}\label{sec:simulations}

We empirically compare our methods to \citet[Thm.\ 4]{kato2021}. We run two simulations with 1000 iterations each: one with Bernoulli outcomes, and one with continuous, bounded outcomes. $5000$ total samples are collected for each iteration and intervals are constructed following each sample. We employ sequential sample-splitting on $\hat{f}$ and $\hat{e}$ to avoid double-dipping and overfitting \citep{waudbysmith2023timeuniform}. Sequential sample-splitting permanently allocates each sample to one of two data folds upon observation. We fit models for $\hat{f}$ and $\hat{e}$ separately on each fold, giving four models in total. Predictions of $\hat{f}$ and $\hat{e}$ are produced from the model fit from the opposite fold. For an individual observation, we estimate the conditional variance by setting $\hat{v}(a,x) = \hat{e}(a,x) - (\hat{f}(a,x))^2$.  When determining $\pi^{\mathrm{A2IPW}}_t$, $\hat{f}$ and $\hat{e}$ are calculated by averaging predictions of the models from both splits, as this calculation occurs prior to observing and assigning the data point to a split. In our simulations, we clip $\hat{v}$ to be no less than 0.01 to avoid division by zero or negative values. During the first 100 samples, $\hat{f}(1,X_t) = 1$, $\hat{f}(0,X_t) = 0$, and $\pi_t = 0.5$. For policy truncation, we set $k_t = \frac{k_{t-1}}{0.999}$ where $k_1 = 2$. Since the method of \cite{kato2021} does not utilize time-varying bounds (at least in its present form), using the \emph{worst-case} bound for the propensities is a conservative way to guarantee time-uniform validity of their CS. Using our proposed truncation scheme can then make these CSs extremely wide. To remedy this, we observe that setting $k_t = 5$ works well for \citet[Thm.\ 4]{kato2021}. Figure~\ref{fig:miscoverage-bernoulli} shows results for these simulations when a Random Forest is used for $\hat{f}$ and $\hat{e}$, and are discussed in detail in the subsequent subsections. Appendix~\ref{appdx:implementation} provides additional results using a k-Nearest Neighbors model. 

\begin{figure*}[t]
    \includegraphics[width = \textwidth]{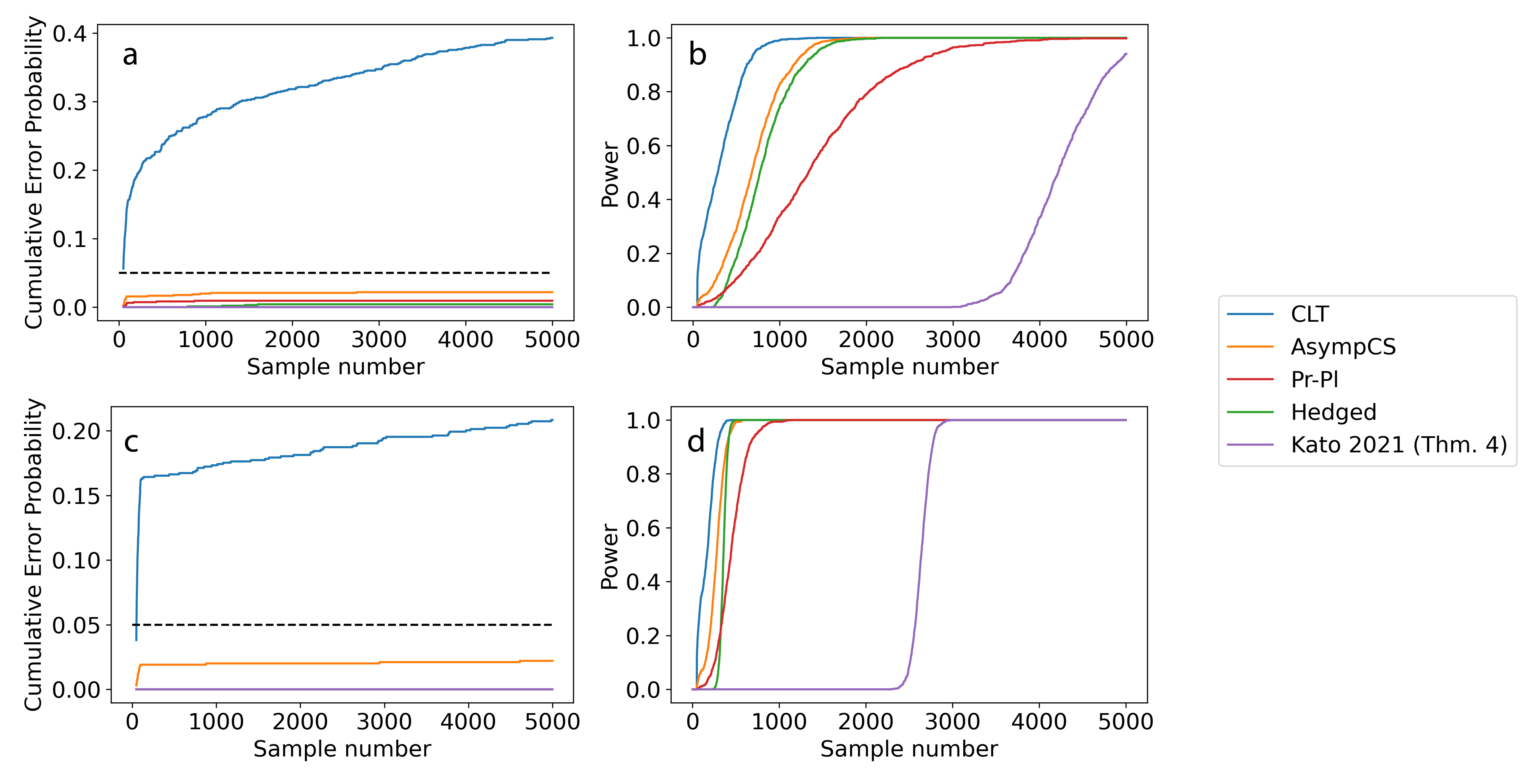}
    \caption{Cumulative error probability (a, c) and power (b, d) as functions of sample size, of experiments from Appendix~\ref{appdx:implementation_bernoulli} and Appendix~\ref{appdx:implementation_bounded}. The first row corresponds to the experiment with Bernoulli outcome, and the bottom row corresponds to the experiment with bounded, continuous outcomes. Intervals based on the CLT (Theorem~\ref{theorem:MDS-CLT}), AsympCS (Theorem~\ref{theorem:asympCS}), Pr-PI (Theorem~\ref{theorem:prpl_empbern}), Hedged (Theorem~\ref{theorem:bettingCS}), and ~\cite[Theorem 4]{kato2021} begin at $t = 50$. }
    \label{fig:miscoverage-bernoulli}
\end{figure*}

\subsection{Bernoulli Outcomes}\label{sec:bernoulli-sim}

In Figure~\ref{fig:miscoverage-bernoulli}, plots (a) and (b) show aggregated results across $1000$ iterations of a simulation with Bernoulli outcomes where the $ATE=0.1$. Full details of the data generating process can be found in Appendix~\ref{appdx:implementation_bernoulli}. Our methods provide significantly narrower intervals due to leveraging tighter concentration inequalities, as well as using time-varying truncation. Performance between our proposed CSs is inline with expectations from the CS literature. Specifically, the AsympCS provides the narrowest interval, at the expense of higher miscoverage probabilities. Since we begin constructing intervals at $t=50$, we avoid experiencing severe interval miscoverage early in the experiment, which prevents inflating the cumulative miscoverage rate. By doing so, we have empirically controlled the time-uniform error probability at level $\alpha = 0.05$. The Hedged-CS provides tighter intervals than the closed form Pr-PI. This is because the Hedged-CS inverts a test martingale instead of a test supermartingale. \citet{waudbysmith2022estimating} note that this removes a source of conservatism in generating CSs; however, this comes at a higher computational cost.

\subsection{Bounded Outcomes}\label{sec:bounded-sim}

Plots (c) and (d) in Figure~\ref{fig:miscoverage-bernoulli} show aggregated results across $1000$ iterations of a simulation with a bounded, continuous outcomes where the $ATE=0.1$. The data generating process can be found in Appendix~\ref{appdx:implementation_bounded}. Results in this section follow an identical pattern to the results in the Bernoulli experiment.

The data generating process yields data which is noticeably heteroskedastic. This implies that $\pi^{\mathrm{AIPW}}$ will produce values that are close to $0$ and $1$. However, when $t$ is small, $\hat{f}$ and $\hat{e}$ can be noisy estimates of the true, unobservable $f$ and $e$. In this case, the truncation schedule plays an interesting balance of preventing noisy estimates of $f$ and $e$ from inducing high variance in $\bar{h}_t$, while still allowing $\pi_t$ to converge to the optimal policy at an appreciable rate.

\subsection{Effect of Truncation Schemes}\label{sec:truncation-sim}

\begin{figure}
\includegraphics[width= \textwidth]{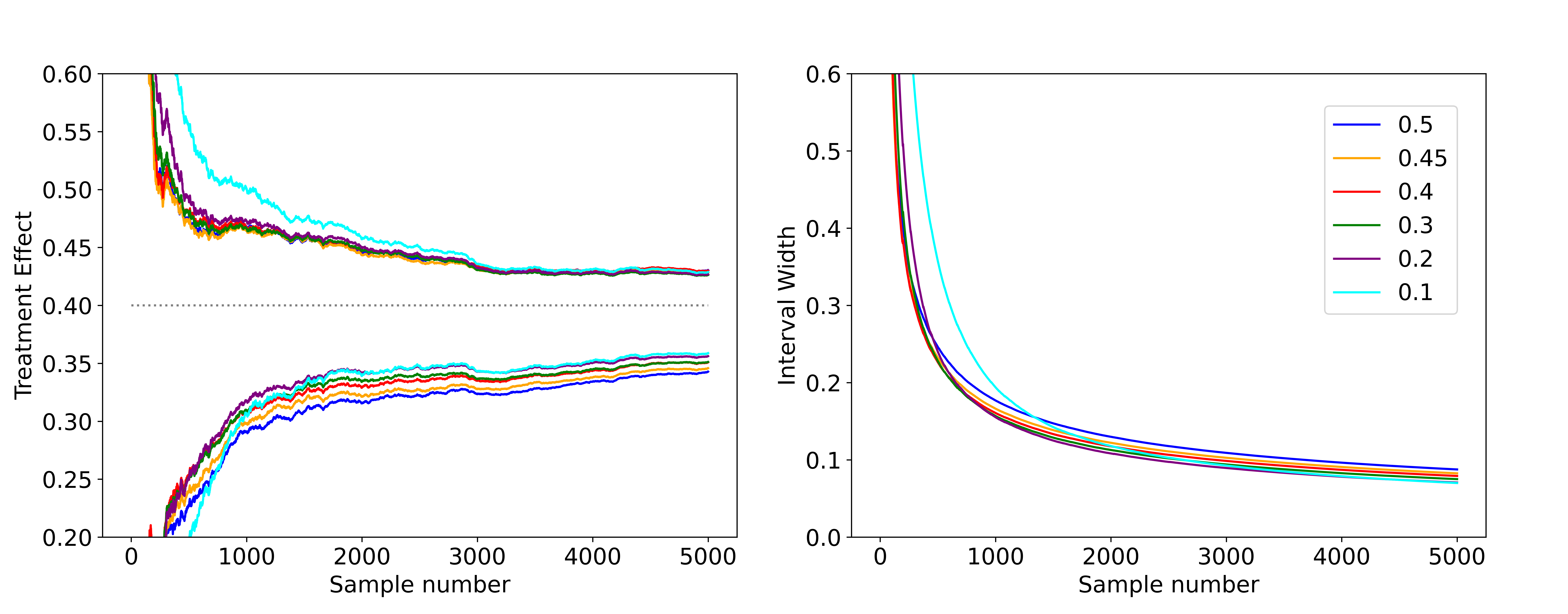}
    \caption{When $\pi_t$ is bounded in a narrower range, intervals produced by a Pr-PI CS are narrower at smaller $t$. }
    \label{fig:truncate}
\end{figure}

The policy studied in this work is deemed optimal because it minimizes the asymptotic variance of an unbiased estimator, the A2IPW estimator. The width of a CI based on the CLT has a direct dependence on the asymptotic variance of the estimator. Naturally, minimizing the asymptotic variance leads to a sense of optimal inference, by minimizing the mean squared error (MSE). In our proof of Theorem~\ref{theorem:MDS-CLT}, we make use of $k_t$ to bound propensity scores away from $0$ and $1$. In turn, we require that $k_t \lVert \hat{f}_t - f \rVert_2 = o_\mathbb{P}(1)$ and $k_t \lVert \pi_t - \pi \rVert_2 = o_\mathbb{P}(1)$. The rate at which $k_t$ increases is limited by the rates that $\lVert \hat{f}_t - f \rVert_2 \xrightarrow[]{p} 0$ and $\lVert \pi_t - \pi \rVert_2 \xrightarrow[]{p} 0$. We also use truncation as a technical tool when considering bounds on $(\lambda_t)_{t=1}^T$ in Theorem~\ref{theorem:bettingCS} and Theorem~\ref{theorem:prpl_empbern}.

Our primary concern in this work lies in anytime-valid inference, and as such, greater attention towards the width of the intervals produced by our CSs at fixed times is warranted. Since the propensity scores set by our policy appear in the denominator of $\hat{\theta}_T^{\text{A2IPW}}$, propensity scores near $0$ or $1$ can make $h_t$ arbitrarily large. The CSs with fixed-time error control considered in this paper make use of the boundedness of $h_t$. Particularly, the proofs make use of an underlying test (super)martingale, which by construction is non-negative. For example, non-negativity is guaranteed by scaling $\lambda_t(\theta')$ such that $\lambda_t(\theta')(h_t - \theta') > -1$ for the Hedged-CS. Temporarily subscribing to the betting analogy of \cite{waudbysmith2022estimating}, an inherent trade-off arises where the analyst must balance the allowable size of their bet, $\lambda_t(\theta')$, with the bounds of the evidence presented by nature, $(h_t - \theta')$. The opportunity to observe large evidence comes at the cost of placing small bets. 

This effect is noted explicitly by \citet[Remark 2]{waudbysmith2022anytimevalid}. In our setting, their intuition implies that faster growth in $k_t$ will yield a smaller asymptotic variance at the cost of having wider intervals at small $t$. In this section, we empirically show that a departure from our optimal policy through truncation will yield narrower intervals at finite times.

We consider a simulation that follows a similar set up to that used in Section~\ref{sec:bounded-sim}, where we modify $k_t$ to be constant. Specifically we set $k_t = 1/\pi_{t,min}$ and we vary $\pi_{t,min}\in \{ 0.5, 0.45, 0.40, $ $0.30,0.20,0.10\}$. For an explicit data generating process, see Appendix~\ref{appdx:more_sim}. We note that $\pi^{\mathrm{AIPW}}$, can be close to $0$ or $1$, and as a result, we truncate the \emph{optimal} policy. Results of a single iteration are shown in Figure~\ref{fig:truncate}. More aggressive truncation (larger values of $\pi_{t,min}$) leads to narrower intervals for small $t$, however, once $t$ is sufficiently large, less aggressive truncation (smaller values of $\pi_{t,min}$) provides narrower intervals. These results suggest that optimizing an adaptive policy for statistical inference at finite times is an interesting direction for future work.

\section{Conclusion}\label{sec:conclusions}

We have provided both confidence intervals (CIs) and confidence sequences (CSs) for the ATE in adaptive experiments using the A2IPW estimator. The CI based on the CLT achieves the semiparametric lower bound of the asymptotic variance under weaker assumptions than in previous work. The CSs with time-uniform error control surpass the performance of previous work considerably. Our methods apply to arbitrary adaptive designs, but we also propose a particular policy truncation scheme that preserves the asymptotic efficiency of the A2IPW estimator while improving finite sample performance. We emphasize that the inference tools (the CIs and CSs) and our proposed policy are individual contributions which do not require the use of one another.

This work provides a clear framework for using the A2IPW estimator in adaptive experiments. There are many interesting directions for further research. As mentioned in Section~\ref{sec:truncation-sim}, the truncation scheme used can have a considerable impact on finite sample performance. In the context of bounded random variables, \cite{shekhar2023nearoptimality} derive lower bounds on the width of a CS and show that betting-based confidence sets are nearly optimal. In our adaptive experiment, the analyst sets the bounds of the observed random variable $h_t$, and could explore minimizing the lower bound of the width of our CSs using their results. Separately, our adaptive design focuses purely on efficient statistical inference. In bandit experiments, an analyst may wish to minimize the \emph{regret}, so that patient welfare is maximized by treatment assignments. An interesting direction for future work is incorporating a notion of regret in the treatment assignments, such as the scheme proposed by \cite{simchi2023regret}. Lastly, we have only considering performing inference on the ATE. An interesting line of future work would be extending these results to other causal estimands.

\paragraph{Acknowledgements}
AR acknowledges support from NSF grants IIS-2229881 and DMS-2310718. 
This paper was prepared for informational purposes in part by the Artificial Intelligence Research group of JPMorgan Chase \& Co and its affiliates (“J.P. Morgan”) and is not a product of the Research Department of J.P. Morgan. J.P. Morgan makes no representation and warranty whatsoever and disclaims all liability, for the completeness, accuracy or reliability of the information contained herein. This document is not intended as investment research or investment advice, or a recommendation, offer or solicitation for the purchase or sale of any security, financial instrument, financial product or service, or to be used in any way for evaluating the merits of participating in any transaction, and shall not constitute a solicitation under any jurisdiction or to any person, if such solicitation under such jurisdiction or to such person would be unlawful.

\bibliography{arxiv_main}

\begin{thebibliography}{24}
\providecommand{\natexlab}[1]{#1}
\providecommand{\url}[1]{\texttt{#1}}
\expandafter\ifx\csname urlstyle\endcsname\relax
  \providecommand{\doi}[1]{doi: #1}\else
  \providecommand{\doi}{doi: \begingroup \urlstyle{rm}\Url}\fi

\bibitem[Armstrong(2022)]{armstrong2022asymptotic}
Timothy~B. Armstrong.
\newblock Asymptotic efficiency bounds for a class of experimental designs.
\newblock \emph{arXiv preprint 2205.02726}, 2022.

\bibitem[Balsubramani(2015)]{balsubramani2015sharp}
Akshay Balsubramani.
\newblock Sharp finite-time iterated-logarithm martingale concentration.
\newblock \emph{arXiv preprint 1405.2639}, 2015.

\bibitem[Balsubramani and Ramdas(2016)]{balsubramani2016sequential}
Akshay Balsubramani and Aaditya Ramdas.
\newblock Sequential nonparametric testing with the law of the iterated logarithm.
\newblock In \emph{Proceedings of the Thirty-Second Conference on Uncertainty in Artificial Intelligence}, UAI'16, 2016.

\bibitem[Chernozhukov et~al.(2018)Chernozhukov, Chetverikov, Demirer, Duflo, Hansen, Newey, and Robins]{chernozhukov2018ddml}
Victor Chernozhukov, Denis Chetverikov, Mert Demirer, Esther Duflo, Christian Hansen, Whitney Newey, and James Robins.
\newblock {Double/debiased machine learning for treatment and structural parameters}.
\newblock \emph{The Econometrics Journal}, 21\penalty0 (1):\penalty0 C1--C68, 2018.

\bibitem[Dai et~al.(2023)Dai, Gradu, and Harshaw]{dai2023clipogd}
Jessica Dai, Paula Gradu, and Christopher Harshaw.
\newblock {CLIP}-{OGD}: An experimental design for adaptive {{Neyman}} allocation in sequential experiments.
\newblock In \emph{Advances in Neural Information Processing Systems}, volume~37, 2023.

\bibitem[Dvoretzky(1972)]{dvoretzky1972}
Aryeh Dvoretzky.
\newblock Asymptotic normality for sums of dependent random variables.
\newblock In \emph{Proceedings of the Sixth Berkeley Symposium on Mathematical Statistics and Probability}, volume~2, 1972.

\bibitem[Hadad et~al.(2021)Hadad, Hirshberg, Zhan, Wager, and Athey]{hadad2021adaptive}
Vitor Hadad, David~A. Hirshberg, Ruohan Zhan, Stefan Wager, and Susan Athey.
\newblock Confidence intervals for policy evaluation in adaptive experiments.
\newblock \emph{Proceedings of the National Academy of Sciences}, 118\penalty0 (15), 2021.

\bibitem[Hahn et~al.(2011)Hahn, Hirano, and Karlan]{hahn2011adaptive}
Jinyong Hahn, Keisuke Hirano, and Dean Karlan.
\newblock Adaptive experimental design using the propensity score.
\newblock \emph{Journal of Business {\&} Economic Statistics}, 29\penalty0 (1):\penalty0 96--108, 2011.

\bibitem[Howard et~al.(2021)Howard, Ramdas, McAuliffe, and Sekhon]{Howard_2021}
Steven~R. Howard, Aaditya Ramdas, Jon McAuliffe, and Jasjeet Sekhon.
\newblock Time-uniform, nonparametric, nonasymptotic confidence sequences.
\newblock \emph{The Annals of Statistics}, 49\penalty0 (2), 2021.

\bibitem[Johari et~al.(2017)Johari, Koomen, Pekelis, and Walsh]{johari2017PeekingTestsWhy}
Ramesh Johari, Pete Koomen, Leonid Pekelis, and David Walsh.
\newblock Peeking at {{A}}/{{B Tests}}: {{Why}} it matters, and what to do about it.
\newblock In \emph{Proceedings of the 23rd {{ACM SIGKDD International Conference}} on {{Knowledge Discovery}} and {{Data Mining}}}, 2017.

\bibitem[Kato et~al.(2021)Kato, Ishihara, Honda, and Narita]{kato2021}
Masahiro Kato, Takuya Ishihara, Junya Honda, and Yusuke Narita.
\newblock Efficient adaptive experimental design for average treatment effect estimation.
\newblock \emph{arXiv preprint 2002.05308}, 2021.

\bibitem[Lo{\`e}ve(1977)]{loeve1977probability}
M.~Lo{\`e}ve.
\newblock \emph{Probability Theory}.
\newblock Graduate texts in mathematics. Springer, 1977.

\bibitem[Ramdas et~al.(2023)Ramdas, Grünwald, Vovk, and Shafer]{ramdas2023gametheoretic}
Aaditya Ramdas, Peter Grünwald, Vladimir Vovk, and Glenn Shafer.
\newblock Game-theoretic statistics and safe anytime-valid inference.
\newblock \emph{Statistical Science}, 2023.

\bibitem[Robbins(1952)]{robbins1952sequential}
Herbert Robbins.
\newblock {Some aspects of the sequential design of experiments}.
\newblock \emph{Bulletin of the American Mathematical Society}, 58\penalty0 (5):\penalty0 527 -- 535, 1952.

\bibitem[Robins et~al.(1994)Robins, Rotnitzky, and Zhao]{robins1994aipw}
James~M. Robins, Andrea Rotnitzky, and Lue~Ping Zhao.
\newblock Estimation of regression coefficients when some regressors are not always observed.
\newblock \emph{Journal of the American Statistical Association}, 89\penalty0 (427):\penalty0 846--866, 1994.

\bibitem[Rubin(1980)]{rubin1980sutva}
Donald~B. Rubin.
\newblock Randomization analysis of experimental data: The {{Fisher}} randomization test comment.
\newblock \emph{Journal of the American Statistical Association}, 75\penalty0 (371):\penalty0 591--593, 1980.

\bibitem[Rubin(1986)]{Rubin1986CommentWI}
Donald~B. Rubin.
\newblock Comment: Which ifs have causal answers.
\newblock \emph{Journal of the American Statistical Association}, 81:\penalty0 961--962, 1986.

\bibitem[Shekhar and Ramdas(2023)]{shekhar2023nearoptimality}
Shubhanshu Shekhar and Aaditya Ramdas.
\newblock On the near-optimality of betting confidence sets for bounded means.
\newblock \emph{arXiv preprint 2310.01547}, 2023.

\bibitem[Simchi-Levi and Wang(2023)]{simchi2023regret}
David Simchi-Levi and Chonghuan Wang.
\newblock Multi-armed bandit experimental design: Online decision-making and adaptive inference.
\newblock In \emph{Proceedings of The 26th International Conference on Artificial Intelligence and Statistics}, volume 206, 2023.

\bibitem[Ville(1939)]{Ville1939}
Jean Ville.
\newblock \emph{\'Etude critique de la notion de collectif}.
\newblock 1939.

\bibitem[Waudby-Smith and Ramdas(2023)]{waudbysmith2022estimating}
Ian Waudby-Smith and Aaditya Ramdas.
\newblock {Estimating means of bounded random variables by betting}.
\newblock \emph{Journal of the Royal Statistical Society Series B: Statistical Methodology}, 2023.

\bibitem[Waudby-Smith et~al.(2023)Waudby-Smith, Arbour, Sinha, Kennedy, and Ramdas]{waudbysmith2023timeuniform}
Ian Waudby-Smith, David Arbour, Ritwik Sinha, Edward~H. Kennedy, and Aaditya Ramdas.
\newblock Time-uniform central limit theory and asymptotic confidence sequences.
\newblock \emph{arXiv preprint 2103.06476}, 2023.

\bibitem[Waudby-Smith et~al.(2024)Waudby-Smith, Wu, Ramdas, Karampatziakis, and Mineiro]{waudbysmith2022anytimevalid}
Ian Waudby-Smith, Lili Wu, Aaditya Ramdas, Nikos Karampatziakis, and Paul Mineiro.
\newblock Anytime-valid off-policy inference for contextual bandits.
\newblock \emph{ACM/IMS Journal of Data Science}, 2024.

\bibitem[Zhang et~al.(2021)Zhang, Janson, and Murphy]{zhang2021}
Kelly Zhang, Lucas Janson, and Susan Murphy.
\newblock Statistical inference with {M}-estimators on adaptively collected data.
\newblock In \emph{Advances in Neural Information Processing Systems}, volume~34, 2021.

\end{thebibliography}

\appendix

\section{Proof of Theorem~\ref{theorem:MDS-CLT}}\label{appdx:mds_clt_proof}

\subsection{High-Level Roadmap}
Our proof follows a similar style to the proof of \cite{kato2021}. 
We consider a martingale difference sequence (MDS) and apply a central limit theorem to find the asymptotic distribution of the sample mean of the MDS. 
The main departure of our proof from their proof is the statement of the central limit theorem which is amenable to making assumptions standard in causal inference.

To outline the proof, first we state our assumptions. Next we establish that $\{z_t\}_{t=1}^T$, where $z_t = h_t - \theta_0$, is a MDS. We then state the MDS central limit theorem by \cite{dvoretzky1972} and show that $\{z_t\}_{t=1}^T$ satisfies the necessary conditions. For the sake of brevity, we defer much of the tedious algebra to Appendix~\ref{appdx:auxiliary_proofs}. Since $\bar{z}_T = T^{-1}\sum_{t=1}^T z_t = T^{-1}\sum_{t=1}^T (h_t- \theta_0) = \hat{\theta}_T^{\mathrm{A2IPW}} - \theta_0$, this result allows us to characterize the asymptotic distribution of $\hat{\theta}_T^{\mathrm{A2IPW}}$. 

\subsection{Assumptions}\label{appdx:assumptions}
\begin{itemize}
\item \textit{\underline{IID Contexts and Potential Outcomes} }:  $\{X_t, Y_t(0), Y_t(1) \}_{t=1}^T$ are independent and identically distributed. 
\item \textit{\underline{Convergence of Regression }}:
$k_t\lVert \hat{f}_{t} - f \rVert_2 = o_{\mathbb{P}}(1)$ 
\item \textit{\underline{Convergence of Policy }}: $k_t\lVert \pi_t - \pi \rVert_2 = o_{\mathbb{P}}(1)$. 
\item \textit{\underline{$\pi$ Bounded Away from 0 }}:  $\frac{1}{\pi} < C_1$ for all $x \in \mathcal{X}$, and $a \in \{0,1\}$, where $C_1$ is a constant such that $C_1 < \infty$. 
\item \textit{\underline{Finite Conditional Variance }}:  $v(a,x) < C_2$  for all $x \in \mathcal{X}$, and $a \in \{0,1\}$, where $C_2$ is a constant such that $C_2 < \infty$. 
\item \textit{\underline{Finite Conditional Variance of Predictions }}: $\text{Var}(\hat{f}_{t-1}(a, X_t) \mid \Omega_{t-1}) < C_3$ for some $C_3 <\infty$ for all $x \in \mathcal{X}$, $a \in \{0,1\}$, and $t \in \{1,2,\dots\}$.
\item \textit{\underline{Finite Conditional Variance of Policy }}: $\text{Var}(\pi_t \mid \Omega_{t-1}) < C_4 $ for some $C_4<\infty$ for all $x \in \mathcal{X}$, $a \in \{0,1\}$, and $t \in \{1,2,\dots\}$.
\end{itemize}

\subsection{$z_t$ is a MDS}
 \cite{kato2021} show the first necessary condition, $\mathbb{E}(z_t \mid \Omega_{t-1}) = 0$. For completeness we present this step here.

\begin{align*}
&\mathbb{E}\big[z_t\mid \Omega_{t-1}\big]\\
&= \mathbb{E}\left[\frac{\mathbbm{1}[A_t=1]\big(Y_t - \hat{f}_{t-1}(1, X_t)\big)}{\pi_t(1\mid X_t, \Omega_{t-1})} - \frac{\mathbbm{1}[A_t=k]\big(Y_t - \hat{f}_{t-1}(0, X_t)\big)}{\pi_t(0\mid X_t, \Omega_{t-1})} + \hat{f}_{t-1}(1, X_t) - \hat{f}_{t-1}(0, X_t) - \theta_0 ~\Bigg|~ \Omega_{t-1}\right]\\
&= \mathbb{E}\Bigg[\hat{f}_{t-1}(1, X_t) - \hat{f}_{t-1}(0, X_t) - \theta_0\\
&\ \ \ \ \ \ \ \ \ \ \ + \mathbb{E}\left[\frac{\mathbbm{1}[A_t=1]\big(Y_t - \hat{f}_{t-1}(1, X_t)\big)}{\pi_t(1\mid X_t, \Omega_{t-1})} - \frac{\mathbbm{1}[A_t=0]\big(Y_t - \hat{f}_{t-1}(0, X_t)\big)}{\pi_t(0\mid X_t, \Omega_{t-1})} ~\Big|~  X_t, \Omega_{t-1}\right]~\Bigg|~\Omega_{t-1}\Bigg]\\
&= \mathbb{E}\left[\hat{f}_{t-1}(1, X_t) - \hat{f}_{t-1}(0, X_t) - \theta_0 + f(1, X_t) - f(0, X_t) - \hat{f}_{t-1}(1, X_t) + \hat{f}_{t-1}(0, X_t) ~\Big|~ \Omega_{t-1}\right] = 0.
\end{align*}

 The second required condition is $\mathbb{E}|z_t|< \infty$. By assumption $\mathbb{E}(z_t^2 \mid \Omega_{t-1}) < M < \infty$, where $M$ is some constant. This follows by uniformly bounded variance assumptions, since $\mathbb{E}(z_t \mid \Omega_{t-1}) = 0$. Since $\mathbb{E}(z_t^2) = \mathbb{E}(\mathbb{E}(z_t^2 \mid \Omega_{t-1}))$, it follows that $\mathbb{E}(z_t^2) < \infty$, since the expectation of a uniformly bounded variable is bounded. This implies existence of the first moment. 

\subsection{MDS Central Limit Theorem}

\cite{kato2021} used a MDS CLT which requires (condition b) a finite $2 + \delta$ moment ($\delta >0$) for $ |z_t| $. Instead we use the MDS CLT as stated by \cite{dvoretzky1972}. This statement contains a Lindeberg type condition where we must only consider the second moment of $ |z_t| $. Since we do not assume boundedness, we opt for this Lindeberg-type statement. For completeness, we present this theorem as it is stated in \citet[Theorem 2]{zhang2021}.

\begin{theorem}[MDS Central Limit Theorem] \nonumber

Let ${Z_T (\mathcal{P})}_{T \geq 1}$ be a sequence of random variables whose distributions are defined by some $\mathcal{P} \in \mathbb{P}$ and some nuisance component $\eta$. Moreover, let ${Z_T (\mathcal{P})}_{T \geq 1}$ be a martingale difference sequence with respect to $\Omega_t$, meaning
$\mathbb{E}_{\mathcal{P}, \eta}[Z_t(\mathcal{P}) \mid \Omega_{t-1}] = 0$ for all $t \geq 1$ and $\mathcal{P} \in \mathbb{P}$. If we assume that,

\begin{enumerate}
    \item $\frac{1}{T} \sum_{t= 1}^{T} \mathbb{E}_{\mathcal{P},\eta} \left[  z_t^2  \mid   \Omega_{t-1} \right] \xrightarrow[]{p} \sigma^2$ uniformly over $\mathcal{P} \in \mathbb{P}$, where $\sigma^2$ is a constant $0<\sigma^2 < \infty$, and that,
    \item for any $\epsilon > 0$, $\frac{1}{T} \sum_{t= 1}^{T} \mathbb{E}_{\mathcal{P}, \eta} \left[ z_t(\mathcal{P})^2 \mathbbm{1}\left[ |z_t(\mathcal{P})| > \epsilon] \mid  \Omega_{t-1} \right]\right] \xrightarrow[]{p} 0$ uniformly over $\mathcal{P} \in \mathbb{P}$,
\end{enumerate}
then $\sqrt{T}(\Bar{z_t}) \xrightarrow[]{d} N(0, \sigma^2)$ uniformly over $\mathcal{P} \in \mathbb{P}$.

\end{theorem}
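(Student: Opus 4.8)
The plan is to prove this classical martingale central limit theorem by the method of characteristic functions, following the standard McLeish-type argument. The first reduction is to show that for each fixed $u \in \mathbb{R}$, $\mathbb{E}_{\mathcal{P},\eta}[e^{iu\sqrt{T}\bar z_T}] \to e^{-u^2\sigma^2/2}$, and then to invoke Lévy's continuity theorem (in a uniform form) to pass from convergence of characteristic functions to convergence in distribution uniformly over $\mathcal{P}\in\mathbb{P}$. Throughout I treat the normalized increments $X_t := z_t/\sqrt{T}$ as a martingale difference triangular array with respect to $\{\Omega_{t-1}\}$, so that $\sqrt{T}\bar z_T = \sum_{t=1}^T X_t$, and I write $s_t := \mathbb{E}_{\mathcal{P},\eta}[X_t^2 \mid \Omega_{t-1}]$ for the conditional variance contributions, noting that Condition 1 states $\sum_{t=1}^T s_t \xrightarrow{p} \sigma^2$.

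First I would establish asymptotic negligibility of the increments, $\max_{t \le T}|X_t| \xrightarrow{p} 0$, as a direct consequence of the conditional Lindeberg Condition 2: for any $\epsilon>0$, $\mathbb{P}(\max_{t}|X_t|>\epsilon)$ is controlled (via Markov's inequality and the tower property) by $\sum_t \mathbb{E}_{\mathcal{P},\eta}[X_t^2\,\mathbb{I}(|X_t|>\epsilon)\mid\Omega_{t-1}]$, which vanishes in probability. Next I would Taylor-expand each conditional characteristic function $\psi_t := \mathbb{E}_{\mathcal{P},\eta}[e^{iuX_t}\mid\Omega_{t-1}]$ to second order, writing $\psi_t = 1 - \tfrac{u^2}{2}s_t + r_t$, and bound $\sum_t |r_t| \xrightarrow{p} 0$ using the elementary inequality $|e^{ix}-1-ix+\tfrac{x^2}{2}| \le \min(|x|^3,\,x^2)$ up to constants, splitting each increment on $\{|X_t|\le\epsilon\}$ and $\{|X_t|>\epsilon\}$ to handle the cubic and quadratic regimes via negligibility and the Lindeberg condition respectively.

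The central structural step is to introduce the complex product $B_T := \prod_{t=1}^T \psi_t$ and to observe that $A_T/B_T := \prod_{t=1}^T e^{iuX_t}/\psi_t$ is a complex martingale with $\mathbb{E}_{\mathcal{P},\eta}[A_T/B_T]=1$, since each factor has conditional mean one. From the Taylor expansion and Condition 1 I would show $\log B_T = \sum_t \log\psi_t \xrightarrow{p} -\tfrac{u^2}{2}\sigma^2$, hence $B_T \xrightarrow{p} e^{-u^2\sigma^2/2}$. Writing $\mathbb{E}[A_T] = \mathbb{E}[(A_T/B_T)\,B_T]$ and using $|A_T|=1$ together with a lower bound $|B_T|\ge c>0$ (valid with high probability because $\sum_t s_t$ is bounded), I would conclude $\mathbb{E}[A_T]\to e^{-u^2\sigma^2/2}$ by a dominated-convergence / uniform-integrability argument, which closes the characteristic-function convergence.

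I expect the main obstacle to be twofold. First, making the remainder control and the lower bound on $|B_T|$ rigorous, since $\log\psi_t$ must be handled on the high-probability event where no $\psi_t$ is near zero; this is exactly where asymptotic negligibility is essential, as it forces $\psi_t$ into a neighborhood of $1$. Second, upgrading every ``in probability'' statement to hold \emph{uniformly} over $\mathcal{P}\in\mathbb{P}$, which requires that the bounds in each step depend on $\mathcal{P}$ only through the quantities appearing in Conditions 1 and 2, followed by an appeal to a uniform continuity theorem. The uniform integrability needed to pass from $\mathbb{E}[(A_T/B_T)B_T]$ to $e^{-u^2\sigma^2/2}$ is the most delicate piece; I would secure it by first truncating the increments at level $\epsilon$, proving the statement for the truncated array, and then letting $\epsilon\to 0$ while using the Lindeberg condition to bound the truncation error uniformly.
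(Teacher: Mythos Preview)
The paper does not prove this statement at all: it is quoted as a known result (a restatement of Theorem~2 in \cite{zhang2021}, itself a uniform-in-$\mathcal{P}$ version of Dvoretzky's 1972 MDS CLT) and then invoked as a black box inside the proof of Theorem~\ref{theorem:MDS-CLT}. There is therefore no ``paper's own proof'' to compare against; you have set yourself a substantially harder task than the paper undertakes.

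On the substance of your sketch: the overall strategy---Taylor-expand the conditional characteristic functions, control remainders via the Lindeberg condition, exploit a product-martingale identity---is the right genre, but your specific route through the ratio martingale $A_T/B_T=\prod_t e^{iuX_t}/\psi_t$ is fragile. For $\mathbb{E}[A_T/B_T]=1$ you need $\psi_t\neq 0$ almost surely, and to pass from $\mathbb{E}[(A_T/B_T)B_T]$ to the limit you need uniform integrability of $A_T/B_T$; since $|A_T/B_T|=1/|B_T|$, this is exactly where things can blow up, and ``$|B_T|\ge c$ with high probability'' is not enough to secure UI. Your truncation remark gestures at the fix but does not close it: truncating the increments changes the $\psi_t$ themselves, and you still have to bound the truncated ratio in $L^1$ uniformly. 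The cleaner classical route is McLeish's identity $\mathbb{E}\bigl[\prod_t(1+iuX_t)\bigr]=1$ combined with $e^{iux}=(1+iux)\exp(-\tfrac{u^2}{2}x^2+r(x))$, because $|1+iux|\le e^{u^2x^2/2}$ gives $\bigl|\prod_t(1+iuX_t)\bigr|\le\exp\bigl(\tfrac{u^2}{2}\sum_t X_t^2\bigr)$, and Conditions~1 and~2 together force $\sum_t X_t^2\xrightarrow{p}\sigma^2$, which is what the standard Hall--Heyde truncation then leverages for uniform integrability. Your anticipation of the uniformity-over-$\mathcal{P}$ bookkeeping is correct in spirit; that layer is handled in \cite{zhang2021} exactly by checking that every bound depends on $\mathcal{P}$ only through the quantities in Conditions~1 and~2.
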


Dropping the requirement of the conditions holding uniformly over $\mathcal{P} \in \mathbb{P}$ recovers the original result by \cite{dvoretzky1972}. Below we show that these two conditions are satisfied. It follows that $$\sqrt{T}(\Bar{z_t}) = \frac{\hat{\theta}^{\mathrm{A2IPW}} - \theta_0}{\sqrt{T}}\xrightarrow[]{d} N(0, \sigma^2),$$ where
$$\sigma^2 = \mathbb{E} \left[ \sum_{a=0}^1 \frac{v(a, X_t)}{\pi(a \mid X_t)} + \left( f(1,X_t) - f(0,X_t) - \theta_0 \right)^2 \right].$$
\newline

\subsubsection{Condition 1 (Conditional Variance)} 
We wish to show that $$\frac{1}{T} \sum_{t= 1}^{T} \mathbb{E} \left[ z_t^2 \mid  \Omega_{t-1} \right] \xrightarrow[]{p} \sigma^2 = \mathbb{E}\left[\sum^{1}_{a=0}\frac{v\big(a, X_t\big)}{\pi(a\mid X_t)} + \Big(f(1,X_t) - f(0,X_t) - \theta_0\Big)^2\right].$$ This is equivalent to showing $$\frac{1}{T} \sum_{t= 1}^{T} \left( \mathbb{E} \left[ z_t^2 \mid  \Omega_{t-1} \right] - \sigma^2 \right) \xrightarrow[]{p} 0.$$ To reduce notational clutter, let $\mathbb{E}(X_t \mid \Omega_{t-1} )$ be denoted as $\mathbb{E}^{t-1}(X_t)$. \citet[Appendix~B]{kato2021} show
\begin{align}
     &\mathbb{E} \left[ z_t^2 \mid  \Omega_{t-1} \right] - \sigma^2 = \mathbb{E}^{t-1} \Bigg[ 
   \frac{(Y_t(1) - \hat{f}_{t-1}(1,X_t))^2}{\pi_t(1 \mid X_t, \Omega_{t-1})} +
   \frac{(Y_t(0) - \hat{f}_{t-1}(0,X_t))^2}{\pi_t(0 \mid X_t, \Omega_{t-1})} \nonumber \\  &+ 
   \left(\hat{f}_{t-1}(1,X_t) - \hat{f}_{t-1}(0,X_t) - \theta_0\right)^2 \nonumber \\&+
   2(f(1,X_t) - f(0,X_t) - \hat{f}_{t-1}(1,X_t) + \hat{f}_{t-1}(0,X_t) )(\hat{f}_{t-1}(1,X_t) - \hat{f}_{t-1}(0,X_t) -\theta_0) \Bigg] \nonumber \\ &- 
   \mathbb{E}^{t-1} \left[ \frac{(Y_t(1) - f(1,X_t))^2}{\pi(1 \mid X_t)} +
   \frac{(Y_t(0) - f(0,X_t))^2}{\pi(0 \mid X_t)} + (f(1,X_t) - f(0,X_t) - \theta_0)^2
   \right] \nonumber \\ 
   &= \sum_{a = 0}^1 \mathbb{E}^{t-1}\left[ \frac{\left( Y_t(a) - \hat{f}_{t-1}(a, X_t)\right)^2}{\pi_t(a \mid X_t, \Omega_{t-1})} - \frac{\left( Y_t(a) - f(a, X_t)\right)^2}{\pi(a \mid X_t)} \right] \label{eq:term1mds}\\ &+ 
   2 \mathbb{E}^{t-1} \left[ \left(f(1,X_t) - f(0,X_t) - \hat{f}_{t-1}(1,X_t) + \hat{f}_{t-1}(0,X_t) \right) \left(\hat{f}_{t-1}(1,X_t) - \hat{f}_{t-1}(0,X_t) -\theta_0 \right) \right] \label{eq:term2mds}\\ &+ \mathbb{E}^{t-1} \left[ \left(\hat{f}_{t-1}(1,X_t) - \hat{f}_{t-1}(0,X_t) - \theta_0\right)^2 - \left( f(1,X_t) - f(0,X_t) - \theta_0 \right)^2 \right] \label{eq:term3mds} .
\end{align}

We now consider terms \eqref{eq:term1mds}, \eqref{eq:term2mds} and \eqref{eq:term3mds} individually. We make use of auxiliary lemmas and defer proofs to Appendix~\ref{appdx:auxiliary_proofs}. In all of the lemmas below, we keep all assumptions from Appendix~\ref{appdx:assumptions}.

\begin{lemma}[Convergence of ~\eqref{eq:term1mds}]\label{lemma:mds1} Under the assumptions of Theorem~\ref{theorem:MDS-CLT}, we have
    $$\sum_{a = 0}^1 \mathbb{E}^{t-1}\left[ \frac{\left( Y_t(a) - \hat{f}_{t-1}(a, X_t)\right)^2}{\pi_t(a \mid X_t, \Omega_{t-1})} - \frac{\left( Y_t(a) - f(a, X_t)\right)^2}{\pi(a \mid X_t)} \right] = o_{\mathbb{P}}(1).$$
\end{lemma}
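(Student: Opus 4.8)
The plan is to fix $a\in\{0,1\}$ and treat each summand separately, showing it is $o_{\mathbb{P}}(1)$ as $t\to\infty$. Throughout I exploit that under $\mathbb{E}^{t-1}[\cdot]$ the history $\Omega_{t-1}$ is frozen while $(X_t,Y_t(a))$ is drawn fresh and, by the iid assumption on $\{(X_t,Y_t(0),Y_t(1))\}$, is independent of $\Omega_{t-1}$; hence $\hat f_{t-1}(a,\cdot)$ and $\pi_t(a\mid\cdot,\Omega_{t-1})$ act as (random-but-fixed) deterministic functions, and $\mathbb{E}[Y_t(a)\mid X_t,\Omega_{t-1}]=\mathbb{E}[Y_t(a)\mid X_t]=f(a,X_t)$. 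The first step is to add and subtract $(Y_t(a)-f(a,X_t))^2/\pi_t(a\mid X_t,\Omega_{t-1})$, splitting the summand into a \emph{regression-error} piece $\mathbb{E}^{t-1}\big[\big((Y_t(a)-\hat f_{t-1}(a,X_t))^2-(Y_t(a)-f(a,X_t))^2\big)/\pi_t(a\mid X_t,\Omega_{t-1})\big]$ and a \emph{propensity-error} piece $\mathbb{E}^{t-1}\big[(Y_t(a)-f(a,X_t))^2\,(1/\pi_t(a\mid X_t,\Omega_{t-1})-1/\pi(a\mid X_t))\big]$. For the first piece I would factor the difference of squares (suppressing the argument $(a,X_t)$) as $(f-\hat f_{t-1})\big(2(Y_t(a)-f)+(f-\hat f_{t-1})\big)$, giving a cross term and a pure squared-error term.

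For the regression-error piece: conditioning further on $X_t$, the cross term has conditional expectation $\tfrac{2(f(a,X_t)-\hat f_{t-1}(a,X_t))}{\pi_t(a\mid X_t,\Omega_{t-1})}\,\mathbb{E}[Y_t(a)-f(a,X_t)\mid X_t,\Omega_{t-1}]=0$, using the exclusion restriction $\mathbb{E}[Y_t(a)\mid X_t,\Omega_{t-1}]=f(a,X_t)$ noted above. The pure squared-error term is nonnegative and, using $1/\pi_t\le k_t$ and $X_t\perp\Omega_{t-1}$, bounded above by $k_t\,\mathbb{E}^{t-1}[(f(a,X_t)-\hat f_{t-1}(a,X_t))^2]=k_t\lVert\hat f_{t-1}(a,\cdot)-f(a,\cdot)\rVert_2^2$. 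Writing $k_t\lVert\hat f_{t-1}-f\rVert_2^2=\big(k_t\lVert\hat f_{t-1}-f\rVert_2\big)\cdot\lVert\hat f_{t-1}-f\rVert_2$, the first factor is $o_{\mathbb{P}}(1)$ by the convergence-of-regression assumption (applied at index $t-1$) and the second is $o_{\mathbb{P}}(1)$ because $\lVert\hat f_{t-1}-f\rVert_2\le\tfrac1{k_t}\big(k_t\lVert\hat f_{t-1}-f\rVert_2\big)$ with $k_t\ge2$, so this piece is $o_{\mathbb{P}}(1)$.

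For the propensity-error piece: conditioning on $X_t$ and using $\mathbb{E}[(Y_t(a)-f(a,X_t))^2\mid X_t]=v(a,X_t)$ turns it into $\mathbb{E}^{t-1}\big[v(a,X_t)\,(1/\pi_t(a\mid X_t,\Omega_{t-1})-1/\pi(a\mid X_t))\big]$. Bounding $|1/\pi_t-1/\pi|=|\pi-\pi_t|/(\pi_t\pi)\le C_1 k_t\,|\pi-\pi_t|$ via the truncation floor $\pi_t\ge1/k_t$ and the lower bound on the limiting policy $\pi$, then Cauchy--Schwarz over $X_t$ gives an upper bound $C_1 k_t\sqrt{\mathbb{E}[v(a,X_t)^2]}\,\lVert\pi_t(a,\cdot)-\pi(a,\cdot)\rVert_2=C_1\sqrt{\mathbb{E}[v(a,X_t)^2]}\cdot k_t\lVert\pi_t-\pi\rVert_2=o_{\mathbb{P}}(1)$ by the convergence-of-policy assumption (the $\sqrt{\mathbb{E}[v(a,X_t)^2]}$ factor is finite under the outcome moment assumptions in force). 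Summing the two pieces over $a\in\{0,1\}$ yields the lemma.

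I expect the propensity-error piece to be the main obstacle: because the truncation floor $1/k_t$ is shrinking, the factor $1/(\pi_t\pi)$ can be of order $k_t$, so the argument hinges precisely on the assumption $k_t\lVert\pi_t-\pi\rVert_2=o_{\mathbb{P}}(1)$ being strong enough to absorb exactly this blow-up, together with enough integrability of $v(a,\cdot)$ to justify the Cauchy--Schwarz step. Two minor points also need care: the cross-term cancellation genuinely relies on the treatment mechanism, though it depends on $\Omega_{t-1}$, not altering the conditional law of $Y_t(a)$ given $X_t$; and one should track the harmless index mismatch between the $\hat f_{t-1}$ that appears in $h_t$ and the $k_t$ in the stated convergence assumption (this is absorbed as long as $k_t$ does not grow faster than a fixed power of $k_{t-1}$, which holds for the truncation schedules considered).
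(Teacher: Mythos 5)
Your proof follows the paper's argument in all but one step: the same splitting of the summand into a regression-error piece and a propensity-error piece, the same cancellation of the cross term by conditioning on $X_t$ and using $\mathbb{E}[Y_t(a)\mid X_t,\Omega_{t-1}]=f(a,X_t)$, and the same bound $\mathbb{E}^{t-1}\big[(f-\hat f_{t-1})^2/\pi_t\big]\le k_t\lVert \hat f_{t-1}-f\rVert_2^2=o_{\mathbb{P}}(1)$, together with the truncation floor $1/\pi_t\le k_t$ and the uniform bound $1/\pi\le C_1$.

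The step that does not go through under the stated hypotheses is the Cauchy--Schwarz bound in the propensity-error piece. You bound $\mathbb{E}^{t-1}\big[v(a,X_t)\,|1/\pi_t-1/\pi|\big]$ by $C_1\sqrt{\mathbb{E}[v(a,X_t)^2]}\;k_t\lVert\pi_t-\pi\rVert_2$ and assert that $\mathbb{E}[v(a,X_t)^2]$ is finite ``under the outcome moment assumptions in force.'' It is not: the theorem assumes only $\mathrm{Var}(Y_t)<\infty$ and $v(a,x)<\infty$ pointwise, which gives $\mathbb{E}[v(a,X_t)]\le\mathrm{Var}\big(Y_t(a)\big)<\infty$ but says nothing about the second moment of $v(a,X_t)$ (an integrable conditional variance can easily have infinite second moment). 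The paper avoids this by an $L^\infty$--$L^1$ rather than $L^2$--$L^2$ split: it pulls the conditional-variance factor out as a constant (bounding it by $\mathrm{Var}(Y_t(a))$ via the law of total variance, i.e.\ effectively treating the conditional variance as bounded) and then controls only the first moment, $\mathbb{E}^{t-1}\big|\pi(a\mid X_t)-\pi_t(a\mid X_t,\Omega_{t-1})\big|\le\lVert\pi_t-\pi\rVert_2$, so that only $C_1$, the truncation bound $k_t$, and the assumption $k_t\lVert\pi_t-\pi\rVert_2=o_{\mathbb{P}}(1)$ are used and no higher moment of $v$ enters. To make your argument close under the theorem's assumptions, replace the Cauchy--Schwarz step by this first-moment bound (or explicitly add the stronger integrability condition $\mathbb{E}[v(a,X_t)^2]<\infty$, which the paper does not assume). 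Your remaining remarks --- the role of the exclusion restriction in the cross-term cancellation and the $\hat f_{t-1}$ versus $k_t$ index mismatch --- are consistent with how the paper handles these points.
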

\begin{lemma}[Convergence of ~\eqref{eq:term2mds}]\label{lemma:mds2} Under the assumptions of Theorem~\ref{theorem:MDS-CLT}, we have
    $$2 \mathbb{E}^{t-1} \left[ \left(f(1,X_t) - f(0,X_t) - \hat{f}_{t-1}(1,X_t) + \hat{f}_{t-1}(0,X_t) \right) \left(\hat{f}_{t-1}(1,X_t) - \hat{f}_{t-1}(0,X_t) -\theta_0 \right) \right]= o_{\mathbb{P}}(1).$$
\end{lemma}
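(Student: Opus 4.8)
\textbf{Proof proposal for Lemma~\ref{lemma:mds2}.}

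The plan is to bound the quantity in~\eqref{eq:term2mds} by Cauchy--Schwarz, factoring it into the $\ell_2$ norm of the regression estimation error (which vanishes) times the $\ell_2$ norm of the estimated conditional treatment effect (which stays bounded in probability). First I would fix $\Omega_{t-1}$ and note that, under the data generating process of Section~\ref{sec:dgp}, the context $X_t \sim p(x)$ is independent of the history while $\hat{f}_{t-1}$ is $\Omega_{t-1}$-measurable; hence $\mathbb{E}^{t-1}$ of any function of $(\hat{f}_{t-1}, X_t)$ is an integral against $p(x)$ with $\hat{f}_{t-1}$ held fixed. Writing
$$D_{t-1}(x) := \bigl(\hat{f}_{t-1}(1,x) - f(1,x)\bigr) - \bigl(\hat{f}_{t-1}(0,x) - f(0,x)\bigr), \qquad G_{t-1}(x) := \hat{f}_{t-1}(1,x) - \hat{f}_{t-1}(0,x) - \theta_0,$$
the quantity in~\eqref{eq:term2mds} equals $-2\langle D_{t-1}, G_{t-1}\rangle_{L^2(\mathbb{P})}$, so Cauchy--Schwarz bounds it in absolute value by $2\,\lVert D_{t-1}\rVert_2\,\lVert G_{t-1}\rVert_2$.

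For the first factor, the triangle inequality gives $\lVert D_{t-1}\rVert_2 \le \lVert\hat{f}_{t-1}(1,\cdot)-f(1,\cdot)\rVert_2 + \lVert\hat{f}_{t-1}(0,\cdot)-f(0,\cdot)\rVert_2 \le 2\lVert\hat{f}_{t-1}-f\rVert_2$, and since $k_{t-1}\ge 2$ the \emph{Convergence of Regression} assumption $k_{t-1}\lVert\hat{f}_{t-1}-f\rVert_2 = o_{\mathbb{P}}(1)$ forces $\lVert D_{t-1}\rVert_2 = o_{\mathbb{P}}(1)$. For the second factor, decompose $G_{t-1}(x) = \bigl(f(1,x)-f(0,x)-\theta_0\bigr) + D_{t-1}(x)$, so that $\lVert G_{t-1}\rVert_2 \le \lVert f(1,\cdot)-f(0,\cdot)-\theta_0\rVert_2 + \lVert D_{t-1}\rVert_2$; the first term is a finite constant under the finite-variance assumptions of Theorem~\ref{theorem:MDS-CLT} (alternatively one may bound $\lVert G_{t-1}\rVert_2$ directly via the \emph{Finite Variance of Predictions} assumption and Markov's inequality), while the second is $o_{\mathbb{P}}(1)=O_{\mathbb{P}}(1)$, so $\lVert G_{t-1}\rVert_2 = O_{\mathbb{P}}(1)$. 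Multiplying the two bounds, the quantity in~\eqref{eq:term2mds} is $o_{\mathbb{P}}(1)\cdot O_{\mathbb{P}}(1)=o_{\mathbb{P}}(1)$, which is the assertion of Lemma~\ref{lemma:mds2}.

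The one point that requires care --- and the place where replacing Kato's boundedness hypothesis with a finite-variance hypothesis actually bites --- is showing that $\lVert f(1,\cdot)-f(0,\cdot)-\theta_0\rVert_2$, equivalently $\mathbb{E}[f(a,X_t)^2]$, is finite from $\mathrm{Var}(Y_t)<\infty$ together with the pointwise bound $v(a,x)<\infty$. I expect to handle this exactly as in the proof of Lemma~\ref{lemma:mds1}: Jensen's inequality gives $\mathbb{E}[f(a,X_t)^2]\le\mathbb{E}[Y_t(a)^2]$, and since the first-period policy $\pi_1(\cdot\mid\cdot)$ is a deterministic function bounded below by $1/k_1$, the unconfoundedness built into the data generating process yields $\mathbb{E}[Y_t(a)^2]=\mathbb{E}[Y_1(a)^2]\le k_1\,\mathbb{E}\bigl[\mathbbm{1}[A_1=a]Y_1^2\bigr]\le k_1\,\mathbb{E}[Y_1^2]<\infty$. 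Everything else is elementary; the only mild bookkeeping nuisance is that the assumptions are indexed by $\hat{f}_t$ and $k_t$ whereas term~\eqref{eq:term2mds} at time $t$ involves $\hat{f}_{t-1}$, which causes no difficulty since $k_{t-1}\ge 2$.
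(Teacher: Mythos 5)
Your proof is correct and follows essentially the same route as the paper's: Cauchy--Schwarz to factor the cross term into $\lVert \hat{f}_{t-1}-f\rVert_2$ (which is $o_{\mathbb{P}}(1)$ by the regression-convergence assumption) times a second factor whose conditional second moment stays bounded. If anything you are slightly more careful than the paper, which in its display silently works with the population contrast $f(1,\cdot)-f(0,\cdot)-\theta_0$ in place of the estimated contrast $\hat{f}_{t-1}(1,\cdot)-\hat{f}_{t-1}(0,\cdot)-\theta_0$ appearing in the lemma statement --- a step your decomposition $G_{t-1}=(f(1,\cdot)-f(0,\cdot)-\theta_0)+D_{t-1}$ makes explicit --- and you additionally verify that $\mathbb{E}\left[\left(f(1,X_t)-f(0,X_t)-\theta_0\right)^2\right]<\infty$ under the finite-variance assumptions, which the paper takes for granted.
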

\begin{lemma}[Convergence of ~\eqref{eq:term3mds}]\label{lemma:mds3} Under the assumptions of Theorem~\ref{theorem:MDS-CLT}, we have
    $$\mathbb{E}^{t-1} \left[ \left(\hat{f}_{t-1}(1,X_t) - \hat{f}_{t-1}(0,X_t) - \theta_0\right)^2 - \left( f(1,X_t) - f(0,X_t) - \theta_0 \right)^2 \right]= o_{\mathbb{P}}(1).$$
\end{lemma}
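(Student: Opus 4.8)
\textbf{Proof proposal for Lemma~\ref{lemma:mds3}.}

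Write $Q_t$ for the left-hand side of the display in Lemma~\ref{lemma:mds3}. The plan is to reduce $Q_t$ to a product of an $\ell_2$ estimation error that vanishes and an $\ell_2$ norm that stays bounded in probability, via the identity $a^2-b^2=(a-b)(a+b)$. Write $\hat d_{t-1}(x):=\hat f_{t-1}(1,x)-\hat f_{t-1}(0,x)$ and $d(x):=f(1,x)-f(0,x)$, and set $a=\hat d_{t-1}(X_t)-\theta_0$, $b=d(X_t)-\theta_0$, so $a-b=\hat d_{t-1}(X_t)-d(X_t)$ and $a+b=\hat d_{t-1}(X_t)+d(X_t)-2\theta_0$. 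Since the data-generating process makes $X_t$ independent of $\Omega_{t-1}$ while $\hat f_{t-1}$ is $\Omega_{t-1}$-measurable, $\mathbb{E}^{t-1}$ just integrates $x$ against $\mathbb{P}$ with $\hat f_{t-1}$ frozen, so $Q_t=\mathbb{E}^{t-1}\big[(\hat d_{t-1}(X_t)-d(X_t))\,(\hat d_{t-1}(X_t)+d(X_t)-2\theta_0)\big]$, and Cauchy--Schwarz gives
\[
|Q_t|\ \le\ \lVert \hat d_{t-1}-d\rVert_2\cdot\lVert \hat d_{t-1}+d-2\theta_0\rVert_2 .
\]

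For the first factor, the triangle inequality gives $\lVert \hat d_{t-1}-d\rVert_2\le\lVert\hat f_{t-1}(1,\cdot)-f(1,\cdot)\rVert_2+\lVert\hat f_{t-1}(0,\cdot)-f(0,\cdot)\rVert_2=O(\lVert\hat f_{t-1}-f\rVert_2)$, and since the hypotheses of Theorem~\ref{theorem:MDS-CLT} give $k_{t-1}\lVert\hat f_{t-1}-f\rVert_2=o_{\mathbb{P}}(1)$ with $k_{t-1}\ge 2$, we get $\lVert\hat f_{t-1}-f\rVert_2\le\tfrac12 k_{t-1}\lVert\hat f_{t-1}-f\rVert_2=o_{\mathbb{P}}(1)$, hence $\lVert \hat d_{t-1}-d\rVert_2=o_{\mathbb{P}}(1)$. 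For the second factor, another triangle inequality gives $\lVert \hat d_{t-1}+d-2\theta_0\rVert_2\le\lVert\hat d_{t-1}-d\rVert_2+2\lVert d-\theta_0\rVert_2$, whose first summand is $o_{\mathbb{P}}(1)$ by the previous step and whose second summand is a finite deterministic constant: by Jensen $f(a,x)^2\le\mathbb{E}[Y_t(a)^2\mid x]=e(a,x)$, and $\mathbb{E}[e(a,X)]<\infty$ (e.g.\ because $\pi_1(a\mid\cdot)\ge 1/k_1$ forces $\mathbb{E}[e(a,X)]\le k_1\,\mathbb{E}[Y_1^2]<\infty$ when $\mathrm{Var}(Y)<\infty$, or via the finite-variance-of-predictions assumption together with $\lVert\hat f_{t-1}-f\rVert_2<\infty$), so $\lVert d-\theta_0\rVert_2<\infty$ and in particular $|\theta_0|\le\lVert d\rVert_2<\infty$. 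Thus $\lVert\hat d_{t-1}+d-2\theta_0\rVert_2=O_{\mathbb{P}}(1)$ and $|Q_t|\le o_{\mathbb{P}}(1)\cdot O_{\mathbb{P}}(1)=o_{\mathbb{P}}(1)$, which is the claim.

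The only delicate point is the bounded factor: the whole purpose of the relaxed assumptions is that $\hat f_{t-1}$ and $Y_t$ need not be uniformly bounded, so one cannot appeal to a sup-norm bound and must instead extract finiteness of $\lVert d-\theta_0\rVert_2$ (and, through the triangle inequality, of $\lVert\hat d_{t-1}\rVert_2$) from the finite-(conditional-)variance and finite-variance-of-predictions hypotheses; once that bookkeeping is done the rest is mechanical. It is worth noting that, unlike Lemma~\ref{lemma:mds1}, this lemma uses neither the truncation nor the propensity lower bound --- no inverse-propensity weights appear in $Q_t$ --- and only invokes the $k_t$-free consequence $\lVert\hat f_{t-1}-f\rVert_2=o_{\mathbb{P}}(1)$ of the regression-convergence assumption.
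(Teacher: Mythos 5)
Your proof is correct and takes essentially the same route as the paper's: factor the difference of squares, use that $\mathbb{E}^{t-1}$ integrates over $X_t$ with $\hat f_{t-1}$ frozen, and apply Cauchy--Schwarz together with $\lVert \hat f_{t-1}-f\rVert_2 = o_{\mathbb{P}}(1)$ (the paper merely rewrites the sum factor as $(a-b)+2b$ and applies Cauchy--Schwarz termwise rather than bounding the product in one step). Your explicit check that $\mathbb{E}\left[\left(f(1,X_t)-f(0,X_t)-\theta_0\right)^2\right]<\infty$ is a welcome extra step that the paper leaves implicit.
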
 
 
 Given Lemmas~\ref{lemma:mds1},~\ref{lemma:mds2}, and ~\ref{lemma:mds3}, convergence in probability to zero for each term is established, and therefore so is the convergence of the sum. Convergence of the conditional variance of the MDS is then established. So far we have shown that $ \mathbb{E}^{t-1}\left[ z_t^2\right] \xrightarrow[]{p} \sigma^2$, but we wish to show that $\frac{1}{T}\sum_{t=1}^T \left( \mathbb{E}^{t-1}\left[ z_t^2\right] - \sigma^2\right) \xrightarrow[]{p} 0$.

Let $a_t = \mathbb{E}^{t-1}\left[z_t^2\right]$ and $a = \sigma^2$. It follows that $\mathbb{E} [a_t] = \mathbb{E}[|a_t|]< M < \infty$ for all t and some constant $M$, where the equality holds since $a_t > 0$. This uniform boundedness implies that $a_t$ is uniformly integrable. Since it is also true that $a_t \xrightarrow[]{p} a$, then by the $L^R$ convergence theorem, we have that $a_t \to a$ in $L^1$, which implies $\mathbb{E}\left[|\mathbb{E}^{t-1}[z_t^2] - \sigma^2|\right] \to 0 $ \citep{loeve1977probability}. 

\cite{kato2021} show through Markov's inequality that  $\frac{1}{T}\sum_{t=1}^T \left( \mathbb{E}^{t-1}\left[ z_t^2\right] - \sigma^2\right) \xrightarrow[]{p} 0$ if $\mathbb{E}\left[|\mathbb{E}^{t-1}[z_t^2] - \sigma^2|\right] \to 0 $. So condition 1 is satisfied.

\subsubsection{Condition 2 (Conditional Lindeberg)}\label{sec:lindeberg}

We seek to show that for any $\delta > 0$, 
$$\frac{1}{T} \sum_{t=1}^{T} \mathbb{E} \left( z_t^2 \mathbbm{1}\left[ |z_t|  > \delta \sqrt{T}\right] ~\Big|~ \Omega_{t-1} \right)\xrightarrow[]{p} 0.$$

Define $b_t = z_t^2 \mathbbm{1}( |z_t|  > \delta \sqrt{T})$. Then $b_t = z_t^2$ w.p. $\mathbb{P}( |z_t|  > \delta \sqrt{T})$ and $0$ otherwise. By Chebyshev's inequality,
$$\mathbb{P} ( |z_t|  > \delta \sqrt{T}) \leq \frac{\text{Var}(z_t)}{\delta^2 T} .$$
We note that $\mathrm{Var}(z_t) = \mathbb{E}(z_t^2) < \infty$. This gives
$$\lim_{T \to \infty}\frac{\text{Var}(z_t)}{\delta^2 T} = 0,$$
which implies that $b_t \xrightarrow[]{p} 0$, and $b_t \xrightarrow[]{d} 0$. 

Note that $ |b_t|  \leq z_t^2 $, and $\mathbb{E}(z_t^2)  < \infty$. By the dominated convergence theorem, $\lim_{T \to \infty}\mathbb{E}(b_t) = \mathbb{E} (\lim_{T \to \infty} b_t) = 0$. Hence, we have
$$\frac{1}{T} \sum_{t=1}^{T} \mathbb{E } \left( z_t^2 \mathbbm{1}\left[ |z_t|  > \delta \sqrt{T}\right] ~\Big|~ \Omega_{t-1} \right) \xrightarrow[]{p} 0.$$

\section{Auxiliary Lemmas and Proofs}\label{appdx:auxiliary_proofs}
This appendix shows proofs for auxiliary lemmas used in Appendix~\ref{appdx:mds_clt_proof}. The proofs involve tedious algebra and are included in full detail for completeness.

\subsection{Proof of Lemma~\ref{lemma:mds1}}
The term considered in Lemma~\ref{lemma:mds1}, specifically term~\eqref{eq:term1mds}, involves a summation over the potential treatments, we choose to focus on a single arbitrary treatment, $a$, and show that the term for an individual treatment converges to zero in probability, and hence, so does the sum. 

\begin{align}
     \mathbb{E}^{t-1} &\left[ \frac{\left( Y_t(a) - \hat{f}_{t-1}(a, X_t)\right)^2}{\pi_t(a \mid X_t, \Omega_{t-1})} - \frac{\left( Y_t(a) - f(a, X_t)\right)^2}{\pi(a \mid X_t)}  \right] \nonumber \\
     &= \mathbb{E}^{t-1} \left[ \frac{\left( Y_t(a) - \hat{f}_{t-1}(a, X_t) + f(a, X_t) - f(a, X_t)\right)^2}{\pi_t(a \mid X_t, \Omega_{t-1})} - \frac{\left( Y_t(a) - f(a, X_t)\right)^2}{\pi(a \mid X_t)} \right] \label{eq:pm_f}\\
     &= \mathbb{E}^{t-1} \left[ \frac{\left( \left( Y_t(a) - f(a, X_t) \right) + \left( f(a, X_t) -\hat{f}_{t-1}(a, X_t) \right) \right)^2}{\pi_t(a \mid X_t, \Omega_{t-1})} - \frac{\left( Y_t(a) - f(a, X_t)\right)^2}{\pi(a \mid X_t)} \right] \label{eq:arrange_f}\\
     &= \mathbb{E}^{t-1} \Bigg[ \frac{ \left( Y_t(a) - f(a, X_t) \right)^2}{\pi_t(a \mid X_t, \Omega_{t-1})} + \frac{2\left(Y_t(a)- f(a, X_t)\right) \left(f(a, X_t) - \hat{f}_{t-1}(a, X_t) \right)^2 }{\pi_t(a \mid X_t, \Omega_{t-1})} \label{eq:expand_factor} \\ &+ \frac{\left( f(a, X_t) - \hat{f}_{t-1}(a, X_t)\right)^2}{\pi_t(a \mid X_t, \Omega_{t-1})} - \frac{\left( Y_t(a) - f(a, X_t)\right)^2}{\pi(a \mid X_t)} \Bigg] \nonumber \\
     &= \mathbb{E}^{t-1} \left[ \left( Y_t(a) - f(a, X_t) \right)^2 \left( \frac{1}{\pi_t(a \mid X_t, \Omega_{t-1})} - \frac{1}{\pi(a \mid X_t)} \right) \right] \label{eq:cancel_fl} \\ &+ 2 \mathbb{E}^{t-1} \left[\frac{\left(Y_t(a)- f(a, X_t)\right) \left(f(a, X_t) - \hat{f}_{t-1}(a, X_t) \right)^2 }{\pi_t(a \mid X_t, \Omega_{t-1})} + \frac{\left( f(a, X_t) - \hat{f}_{t-1}(a, X_t)\right)^2}{\pi_t(a \mid X_t, \Omega_{t-1})} \right] .\nonumber 
    \nonumber
     \end{align}
     
     Above,~\eqref{eq:pm_f} simultaneously adds and subtracts $f(a,X_t)$, while \eqref{eq:arrange_f} and \eqref{eq:expand_factor} square the binomial term. In equation~\eqref{eq:cancel_fl}, we factor $(Y_t(a) - f(a,X_t))^2$ from the first and final terms, and utilize linearity of expectation. Continuing,
     \begin{align}
     &\mathbb{E}^{t-1} \left[ \left( Y_t(a) - f(a, X_t) \right)^2 \left( \frac{1}{\pi_t(a \mid X_t, \Omega_{t-1})} - \frac{1}{\pi(a \mid X_t)} \right) \right] \nonumber \\ &+ 2 \mathbb{E}^{t-1} \left[\frac{\left(Y_t(a)- f(a, X_t)\right) \left(f(a, X_t) - \hat{f}_{t-1}(a, X_t) \right)^2 }{\pi_t(a \mid X_t, \Omega_{t-1})} + \frac{\left( f(a, X_t) - \hat{f}_{t-1}(a, X_t)\right)^2}{\pi_t(a \mid X_t, \Omega_{t-1})} \right] \nonumber \\
    &\leq \mathbb{E}^{t-1} \left[ \left( Y_t(a) - f(a, X_t) \right)^2 \left( \frac{1}{\pi_t(a \mid X_t, \Omega_{t-1})} - \frac{1}{\pi(a \mid X_t)} \right) \right] \label{eq:policy_trunc_ineq} \\ &+ 2 \mathbb{E}^{t-1} \left[\frac{\left(Y_t(a)- f(a, X_t)\right) \left(f(a, X_t) - \hat{f}_{t-1}(a, X_t) \right)^2 }{\pi_t(a \mid X_t, \Omega_{t-1})}\right] + \mathbb{E}^{t-1} \left[ k_t \left( f(a, X_t) - \hat{f}_{t-1}(a, X_t)\right)^2\right] \nonumber\\
     &= \mathbb{E}^{t-1} \left[ \left( Y_t(a) - f(a, X_t) \right)^2 \left( \frac{1}{\pi_t(a \mid X_t, \Omega_{t-1})} - \frac{1}{\pi(a \mid X_t)} \right) \right] \label{eq:substiture_l2normdef}\\ &+ 2 \mathbb{E}^{t-1} \left[\frac{\left(Y_t(a)- f(a, X_t)\right) \left(f(a, X_t) - \hat{f}_{t-1}(a, X_t) \right)^2 }{\pi_t(a \mid X_t, \Omega_{t-1})}\right]+ k_t \left(\lVert \hat{f}(a, X_t) - f(a, X_t))\rVert_2 \right)^2 \nonumber .
    \nonumber
    \end{align}
Inequality~\eqref{eq:policy_trunc_ineq} follows since our policy is truncated within $[\frac{1}{k_t}, 1 - \frac{1}{k_t}]$. Equation~\eqref{eq:substiture_l2normdef} uses norm notation in the final term so that we may reference our assumptions further in the proof. We now turn our focus to simplifying the middle term of equation~\eqref{eq:substiture_l2normdef},
    \begin{align}
     &\mathbb{E}^{t-1} \left[ \left( Y_t(a) - f(a, X_t) \right)^2 \left( \frac{1}{\pi_t(a \mid X_t, \Omega_{t-1})} - \frac{1}{\pi(a \mid X_t)} \right) \right] \nonumber \\ &+ 2 \mathbb{E}^{t-1} \left[\frac{\left(Y_t(a)- f(a, X_t)\right) \left(f(a, X_t) - \hat{f}_{t-1}(a, X_t) \right)^2 }{\pi_t(a \mid X_t, \Omega_{t-1})}\right]+ k_t \left(\lVert \hat{f}(a, X_t) - f(a, X_t))\rVert_2 \right)^2 \nonumber\\
     &= \mathbb{E}^{t-1} \left[ \left( Y_t(a) - f(a, X_t) \right)^2 \left( \frac{1}{\pi_t(a \mid X_t, \Omega_{t-1})} - \frac{1}{\pi(a \mid X_t)} \right) \right] \label{eq:expand_middle}\\ &+ 2 \left( \mathbb{E}^{t-1} \left[ \frac{Y_t(a)}{\pi_t(a \mid X_t, \Omega_{t-1})}\left(f(a, X_t) - \hat{f}(a, X_t) \right)^2 \right] - \mathbb{E}^{t-1} \left[ \frac{f(a, X_t)}{\pi_t(a \mid X_t, \Omega_{t-1})} \left(\hat{f}(a, X_t) - f(a, X_t) \right)^2 \right] \right)\nonumber\\ &+ k_t \left(\lVert \hat{f}(a) - f(a))\rVert_2 \right)^2\nonumber\\
    &=\mathbb{E}^{t-1} \left[ \left( Y_t(a) - f(a, X_t) \right)^2 \left( \frac{1}{\pi_t(a \mid X_t, \Omega_{t-1})} - \frac{1}{\pi(a \mid X_t)} \right) \right] \nonumber \\ &+ 2 \Bigg( \mathbb{E}^{t-1} \left[ \mathbb{E} \left[ \frac{Y_t(a)}{\pi_t(a \mid X_t, \Omega_{t-1})}\left(f(a, X_t) - \hat{f}(a, X_t) \right)^2 ~\Big|~ X_t, \Omega_{t-1} \right] \right] \label{eq:iteratedexpectation_middle1} \\ &- \mathbb{E}^{t-1} \left[ \mathbb{E} \left[ \frac{f(a, X_t)}{\pi_t(a \mid X_t, \Omega_{t-1})} \left(\hat{f}(a, X_t) - f(a, X_t) \right)^2 ~\Big|~ X_t, \Omega_{t-1} \right] \right] \Bigg) \label{eq:iteratedexpectation_middle2} \\& + k_t \left(\lVert \hat{f}(a, X_t) - f(a, X_t))\rVert_2 \right)^2 \nonumber , \nonumber
     \end{align}
where equation~\eqref{eq:expand_middle} expands the term of interest, and terms~\eqref{eq:iteratedexpectation_middle1} and \eqref{eq:iteratedexpectation_middle2} apply the law of iterated expectation. Conditioning on $X_t$, the only non-constant term in terms ~\eqref{eq:iteratedexpectation_middle1} and \eqref{eq:iteratedexpectation_middle2} is $Y_t(a)$, whose conditional expectation on $X_t$ is $f(a,X_t)$. Therefore, the terms ~\eqref{eq:iteratedexpectation_middle1} and \eqref{eq:iteratedexpectation_middle2} reduce to 0. Simplifying, we have
     \begin{align}
     &\mathbb{E}^{t-1} \left[ \left( Y_t(a) - f(a, X_t) \right)^2 \left( \frac{1}{\pi_t(a \mid X_t, \Omega_{t-1})} - \frac{1}{\pi(a \mid X_t)} \right) \right] \nonumber \\ &+ 2 \Bigg( \mathbb{E}^{t-1} \left[ \mathbb{E} \left[ \frac{Y_t(a)}{\pi_t(a \mid X_t, \Omega_{t-1})}\left(f(a, X_t) - \hat{f}(a, X_t) \right)^2 ~\Big|~ X_t, \Omega_{t-1} \right] \right] \nonumber \\ &- \mathbb{E}^{t-1} \left[ \mathbb{E} \left[ \frac{f(a, X_t)}{\pi_t(a \mid X_t, \Omega_{t-1})} \left(\hat{f}(a, X_t) - f(a, X_t) \right)^2 ~\Big|~ X_t, \Omega_{t-1} \right] \right] \Bigg) \nonumber \\& + k_t \left(\lVert \hat{f}(a, X_t) - f(a, X_t))\rVert_2 \right)^2 \nonumber \\ 
&= \mathbb{E}^{t-1} \left[ \left( Y_t(a) - f(a, X_t) \right)^2 \left( \frac{1}{\pi_t(a \mid X_t, \Omega_{t-1})} - \frac{1}{\pi(a \mid X_t)} \right) \right] \label{eq:nosecondterm} \\ &+ k_t \left(\lVert \hat{f}(a,X_t) - f(a, X_t))\rVert_2 \right)^2 .\nonumber 
\end{align}

By assumption, $k_t \lVert\hat{f}(a,X_t) - f(a,X_t)\rVert_2 = o_{\mathbb{P}}(1)$. It follows then that $k_t \left(\lVert \hat{f}(a,X_t) - f(a, X_t))\rVert_2 \right)^2 = o_{\mathbb{P}}(1)$. Equation~\eqref{eq:nosecondterm} can be further simplified to
\begin{align}
    &\mathbb{E}^{t-1} \left[ \left( Y_t(a) - f(a, X_t) \right)^2 \left( \frac{1}{\pi_t(a \mid X_t, \Omega_{t-1})} - \frac{1}{\pi(a \mid X_t)} \right) \right] \nonumber \\ &+ k_t \left(\lVert \hat{f}(a,X_t) - f(a, X_t))\rVert_2 \right)^2 \nonumber \\ 
    &= \mathbb{E}^{t-1} \left[ \left( Y_t(a) - f(a, X_t) \right)^2 \left( \frac{1}{\pi_t(a \mid X_t, \Omega_{t-1})} - \frac{1}{\pi(a \mid X_t)} \right) \right] + o_\mathbb{P}(1) \nonumber \\
    &= \mathbb{E} \left[ \mathbb{E} \left[ \frac{\pi(a \mid X_t) - \pi_t(a \mid X_t, \Omega_{t-1})}{\pi(a \mid X_t)\pi_t(a \mid X_t, \Omega_{t-1})} \left(Y_t(a) - f(a, X_t)\right)^2 ~\Big|~ X_t, \Omega_{t-1}\right] ~\Big|~ \Omega_{t-1}\right] + o_\mathbb{P}(1) \label{eq:total_expec}\\
    &= \mathbb{E} \left[ \frac{\pi(a \mid X_t) - \pi_t(a \mid X_t, \Omega_{t-1})}{\pi(a \mid X_t)\pi_t(a \mid X_t, \Omega_{t-1})} \mathbb{E} \left[ \left(Y_t(a) - f(a, X_t)\right)^2 ~\Big|~ X_t, \Omega_{t-1}\right] ~\Big|~ \Omega_{t-1} \right] + o_\mathbb{P}(1)\label{eq:total_indep} \\
    &\leq C_1 C_2 k_t \mathbb{E}^{t-1} \left[\pi(a \mid X_t) - \pi_t(a \mid X_t, \Omega_{t-1}) \right]+o_\mathbb{P}(1) = o_\mathbb{P}(1) \label{eq:substitute_condvar}.
\end{align}
Equation~\eqref{eq:total_expec} follows from the law of total expectation. In equation~\eqref{eq:total_indep} $\pi_t(a \mid X_t, \Omega_{t-1})$ given $X_t$ and $\Omega_{t-1}$ is constant, and can be moved out of the inner expectation, away from $(Y_t(a) - f(a,X_t))^2$. The bound~\eqref{eq:substitute_condvar} then utilizes our policy truncation and our assumption that $\frac{1}{\pi} $ is uniformly bounded by $C_1$. We are able to bound the denominator with a constant, and move this constant outside of the expectation. Simultaneously, we note that the inner expectation is by definition $v(a,x)$. By assumption, $v(a,x)$ is bounded uniformly by $C_2 < \infty$. The bound~\ref{eq:substitute_condvar} reduces to $o_{\mathbb{P}}(1)$, since convergence in $\ell_2$ implies convergence in $\ell_1$, and the Lemma is proved.

\subsection{Proof of Lemma~\ref{lemma:mds2}}
We look to prove that
    $$2 \mathbb{E}^{t-1} \left[ \left(f(1,X_t) - f(0,X_t) - \hat{f}_{t-1}(1,X_t) + \hat{f}_{t-1}(0,X_t) \right) \left(\hat{f}_{t-1}(1,X_t) - \hat{f}_{t-1}(0,X_t) -\theta_0 \right) \right]= o_{\mathbb{P}}(1).$$
For simplicity, we temporarily ignore the constant. Continuing, 
\begin{align}
\mathbb{E}^{t-1} &\left[ \left(f(1,X_t) - f(0,X_t) -\hat{f}_{t-1}(1,X_t) + \hat{f}_{t-1}(0,X_t) \right) \left(f(1,X_t) - f(0,X_t) - \theta_0 \right)\right] \nonumber \\ 
&= \mathbb{E}^{t-1} \left[ \left(f(1, X_t) - \hat{f}_{t-1}(1,X_t) \right) \left(f(1, X_t) - f(0,X_t) - \theta_0\right) \right] \label{eq:binomial_term2}\\ &\hspace{10mm}+ \mathbb{E}^{t-1} \left[ \left( f(0,X_t) - \hat{f}(0,X_t) \right)\left(f(1,X_t) - f(0, X_t) - \theta_0\right)  \right] \nonumber \\
&\leq \sqrt{\mathbb{E}^{t-1} \left[\left(f(1, X_t) - \hat{f}_{t-1}(1,X_t) \right)^2 \right] \mathbb{E}^{t-1} \left[ \left(f(1, X_t) - f(0,X_t) - \theta_0\right)^2 \right] } \label{eq:cauchyschwartz_mds2}\\
&\hspace{10mm}+ \sqrt{\mathbb{E}^{t-1} \left[\left( f(0,X_t) - \hat{f}(0,X_t) \right)^2 \right]\mathbb{E}^{t-1} \left[ \left(f(1, X_t) - f(0,X_t) - \theta_0\right)^2 \right]} , \nonumber 
\end{align}
where equation ~\eqref{eq:binomial_term2} separates terms from different treatments and utilizes the linearity of expectation. Bound~\eqref{eq:cauchyschwartz_mds2} then follows from applying the Cauchy-Schwarz inequality. We conclude by showing 
\begin{align*}
&\sqrt{\mathbb{E}^{t-1} \left[\left(f(1, X_t) - \hat{f}_{t-1}(1,X_t) \right)^2 \right] \mathbb{E}^{t-1} \left[ \left(f(1, X_t) - f(0,X_t) - \theta_0\right)^2 \right] } \\
&\hspace{10mm}+ \sqrt{\mathbb{E}^{t-1} \left[\left( f(0,X_t) - \hat{f}(0,X_t) \right)^2 \right]\mathbb{E}^{t-1} \left[ \left(f(1, X_t) - f(0,X_t) - \theta_0\right)^2 \right]} \nonumber \\
&= \lVert \hat{f} - f \rVert_2 \sqrt{\mathbb{E}^{t-1} \left[ \left(f(1, X_t) - f(0,X_t) - \theta_0\right)^2 \right]} + \lVert \hat{f} - f \rVert_2 \sqrt{\mathbb{E}^{t-1} \left[ \left(f(1, X_t) - f(0,X_t) - \theta_0\right)^2 \right]}\nonumber\\
&= o_\mathbb{P}(1). \nonumber
\end{align*}
 Using norm notation and applying the assumption of convergence of regression in $\ell_2$-norm, convergence is established, and the lemma is proved.

\subsection{Proof of Lemma~\ref{lemma:mds3}}
We wish to prove that
$$\mathbb{E}^{t-1} \left[ \left(\hat{f}_{t-1}(1,X_t) - \hat{f}_{t-1}(0,X_t) - \theta_0\right)^2 - \left( f(1,X_t) - f(0,X_t) - \theta_0 \right)^2 \right]= o_{\mathbb{P}}(1).$$
We begin by expanding this term,
\begin{align}
   \mathbb{E}^{t-1} &\left[ \left(\hat{f}_{t-1}(1,X_t) - \hat{f}_{t-1}(0,X_t) - \theta_0\right)^2 - \left( f(1,X_t) - f(0,X_t) - \theta_0 \right)^2 \right] \nonumber \\
    & = \mathbb{E}^{t-1} \Bigg[ \left( \left(\hat{f}(1,X_t) - \hat{f}(0, X_t) - \theta_0\right) + \left( f(1,X_t) - f(0,X_t) - \theta_0 \right) \right)\label{eq:a_minus_b}\\ &\hspace{20mm}\times \left(\left(\hat{f}(1,X_t) - \hat{f}(0, X_t) - \theta_0\right) - \left( f(1,X_t) - f(0,X_t) - \theta_0 \right)\right)  \Bigg] \nonumber \\
&= \mathbb{E}^{t-1} \Bigg[ \left((\hat{f}(1,X_t) + f(1,X_t)) - (f(0,X_t) +\hat{f}(0, X_t)) - 2\theta_0\right) \label{eq:reduce_theta}\\ &\hspace{20mm}\times \left(\left(\hat{f}(1,X_t) - f(1,X_t) \right) + \left( f(0,X_t) - \hat{f}(0, X_t)\right) \right)  \Bigg]. \nonumber 
\end{align}
Equation~\eqref{eq:a_minus_b} arises from the fact that $a^2 - b^2 = (a+b)(a-b)$ for real numbers $a, b$. Equation~\eqref{eq:reduce_theta} then collapses $\theta_0$ to a single term. We now add and subtract $f(1,X_t)$ and $f(0,X_t)$ to the first term of equation~\eqref{eq:reduce_theta}, giving
\begin{align}
&\mathbb{E}^{t-1} \Bigg[ \left((\hat{f}(1,X_t) + f(1,X_t)) - (f(0,X_t) +\hat{f}(0, X_t)) - 2\theta_0\right) \nonumber \\ &\hspace{20mm}\times \left(\left(\hat{f}(1,X_t) - f(1,X_t) \right) + \left( f(0,X_t) - \hat{f}(0, X_t)\right) \right)  \Bigg] \nonumber \\ 
&= \mathbb{E}^{t-1} \Bigg[\left( (\hat{f}(1,X_t) - f(1,X_t)) + (f(0,X_t) - \hat{f}(0, X_t)) + 2\left(f(1,X_t) - f(0,X_t) - \theta_0 \right) \right)\label{eq:f_differences}\\ &\hspace{20mm}\times \left(\left(\hat{f}(1,X_t) - f(1,X_t) \right) + \left( f(0,X_t) - \hat{f}(0, X_t)\right) \right)  \Bigg]. \nonumber 
\end{align}
Equation~\eqref{eq:f_differences} completes this step, and rearranges terms so that we may use the assumption of convergence of regression.

Next, distributing the second term in equation~\eqref{eq:f_differences}, along with the use of the linearity of expectation gives
\begin{align}
     \mathbb{E}^{t-1} &\Bigg[\left( (\hat{f}(1,X_t) - f(1,X_t)) + (f(0,X_t) - \hat{f}(0, X_t)) +2\left(f(1,X_t) - f(0,X_t) - \theta_0 \right) \right) \nonumber\\ &\hspace{20mm}\times \left(\left(\hat{f}(1,X_t) - f(1,X_t) \right) + \left( f(0,X_t) - \hat{f}(0, X_t)\right) \right)  \Bigg] \nonumber\\
     &= \mathbb{E}^{t-1} \Bigg[\left( (\hat{f}(1,X_t) - f(1,X_t)) + (f(0,X_t) - \hat{f}(0, X_t)) \right)^2 \Bigg] \nonumber \\ &+ 2 \mathbb{E}^{t-1} \Bigg[\left(f(1,X_t) - f(0,X_t) - \theta_0 \right) \left(\left(\hat{f}(1,X_t) - f(1,X_t) \right) + \left( f(0,X_t) - \hat{f}(0, X_t)\right) \right)\Bigg] \label{eq:f_difference_distributed}.
     \end{align}
The first term in equation~\eqref{eq:f_difference_distributed} converges in probability by assumption. For the second term, we distribute $f(1,X_t) - f(0,X_t) - \theta_0)$ yielding 
     \begin{align}
     &\mathbb{E}^{t-1} \Bigg[\left( (\hat{f}(1,X_t) - f(1,X_t)) +  (f(0,X_t) - \hat{f}(0, X_t)) \right)^2 \Bigg] \\  &+ 2 \mathbb{E}^{t-1} \Bigg[\left(f(1,X_t) - f(0,X_t)  - \theta_0 \right) \left(\left(\hat{f}(1,X_t) - f(1,X_t) \right) + \left( f(0,X_t) - \hat{f}(0, X_t)\right)  \right) \nonumber \\
     &= \mathbb{E}^{t-1} \left[\left( 2\left(f(1,X_t) - f(0,X_t)  - \theta_0 \right) \right)\left( f(1,X_t) - \hat{f}(1, X_t)\right) \right]\\ &\hspace{10mm}+ \mathbb{E}^{t-1} \left[\left( 2\left(f(1,X_t) - f(0,X_t)  - \theta_0 \right) \right)\left( f(0,X_t) - \hat{f}(0, X_t)\right) \right] + o_\mathbb{P}(1) \label{eq:dist_2terms}.\
\end{align}
Applying the Cauchy-Schwarz inequality to each term in equation~\eqref{eq:dist_2terms} gives
\begin{align}
    &\mathbb{E}^{t-1} \left[\left( 2\left(f(1,X_t) - f(0,X_t)  - \theta_0 \right) \right)\left( f(1,X_t) - \hat{f}(1, X_t)\right) \right]\\ &\hspace{10mm}+ \mathbb{E}^{t-1} \left[\left( 2\left(f(1,X_t) - f(0,X_t)  - \theta_0 \right) \right)\left( f(0,X_t) - \hat{f}(0, X_t)\right) \right] + o_\mathbb{P}(1) \nonumber \\
    &\leq \sqrt{\mathbb{E}^{t-1} \left[\left( 2\left(f(1,X_t) - f(0,X_t)  - \theta_0 \right) \right)^2\right] \mathbb{E}^{t-1}\left[\left( f(1,X_t) - \hat{f}(1, X_t)\right)^2 \right]} \label{eq:cauchy_term3}\\ &\hspace{10mm}+ \sqrt{\mathbb{E}^{t-1} \left[\left( 2\left(f(1,X_t) - f(0,X_t)  - \theta_0 \right) \right)^2\right]\mathbb{E}^{t-1} \left[\left( f(0,X_t) - \hat{f}(0, X_t)\right)^2 \right]} + o_\mathbb{P}(1) \nonumber \\
    &= \sqrt{\mathbb{E}^{t-1} \left[\left( 2\left(f(1,X_t) - f(0,X_t)  - \theta_0 \right) \right)^2\right]} \lVert \hat{f} - f \rVert_2 \label{eq:2subs_convergence} \\ &\hspace{10mm}+ \sqrt{\mathbb{E}^{t-1} \left[\left( 2\left(f(1,X_t) - f(0,X_t)  - \theta_0 \right) \right)^2\right]}\lVert \hat{f} - f \rVert_2  + o_\mathbb{P}(1) \nonumber.
\end{align}

Equation~\eqref{eq:2subs_convergence} follows from the bound~\eqref{eq:cauchy_term3} by definition. Since $\lVert \hat{f} - f \rVert_2 = o_{\mathbb{P}}(1)$, Equation~\eqref{eq:2subs_convergence} reduces to $o_{\mathbb{P}}(1)$, and the lemma is proved.

\section{Proof of Theorem~\ref{theorem:bettingCS}}\label{appdx:nonasymp_proof}

\subsection{Proof Outline}
The proof \emph{adapts} the proof of \citet[Theorem 1]{waudbysmith2022anytimevalid} to our problem setting. The only departure in our proof is that our parameter space and $(\lambda_t)_{t=1}^{T}$ are not strictly non-negative. We include this proof to demonstrate how our bounds on $\lambda_t$ originate as well as showing how our proof does not make use of the mirroring technique to form a $(1-\alpha)$-upper CS. Although these adjustments are immediate and obvious to those familiar with the anytime-valid inference literature, we include this proof for completeness. We begin by stating and proving a lemma that demonstrates how to construct an arbitrary $(1-\alpha)$ Betting-CS for our problem setting. We then construct a Hedged-CS, where we specify the capital process, the convex combination, relevant user-specified parameters and invoke our adapted lemma.

\subsection{Constructing a $(1-\alpha)$ Betting-CS}
\begin{lemma}\label{lemma:bettingCS}
    Assume we observe data following the data generating process of Section~\ref{sec:dgp}. Assume that $Y_t \in [0,1]$ $\forall t \in 1,\dots, T.$ Suppose that $\pi_t(1 \mid X_t, \Omega_{t-1}) \in [\frac{1}{k_t}, 1-\frac{1}{k_t}]$ for all $ t \in 1,\dots, T,$ then 
    \begin{equation*}
        C_T^{\text{Betting}} := \bigcap_{t \leq T} \left\{ \theta^{'} \in [-1,1] : \prod_{t=1}^{T} \left(1 + \lambda_t(\theta^{'})(h_t - \theta^{'} )\right) < \frac{1}{\alpha}  \right\},
    \end{equation*}
    forms a $(1-\alpha)$ CS for $\theta_0$, where $\lambda_t$ is a predictable sequence.
\end{lemma}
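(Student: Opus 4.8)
The plan is the standard recipe for betting-based confidence sequences: exhibit, at the true parameter $\theta_0$, a nonnegative test martingale with unit initial value, and then invoke Ville's inequality. Concretely, for each $\theta' \in [-1,1]$ let $\mathcal{K}_t^{+}(\theta') := \prod_{s=1}^{t}\bigl(1 + \lambda_s(\theta')(h_s - \theta')\bigr)$ with $\mathcal{K}_0^{+}(\theta') := 1$, so that $C_T^{\text{Betting}} = \bigcap_{t\le T}\{\theta'\in[-1,1] : \mathcal{K}_t^{+}(\theta') < 1/\alpha\}$. I would first record, as already established in the proof of Theorem~\ref{theorem:MDS-CLT} following \citet{kato2021}, that $z_t := h_t - \theta_0$ is a martingale difference sequence with respect to $(\Omega_t)_{t\ge 0}$, i.e. $\mathbb{E}[h_t - \theta_0 \mid \Omega_{t-1}] = 0$, and that $(\lambda_t)$ is predictable, so $\lambda_t(\theta_0)$ is $\Omega_{t-1}$-measurable. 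Note also that $\theta_0 = \mathbb{E}[Y(1)-Y(0)] \in [-1,1]$ since $Y_t\in[0,1]$, so $\theta_0$ is an admissible value of $\theta'$.

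\emph{Step 1 (range of $h_t$, hence admissible $\lambda_t$).} Using $Y_t\in[0,1]$, the (implicitly clipped) regression estimates $\hat f_{t-1}(a,X_t)\in[0,1]$, and the truncation forcing $\pi_t(a\mid X_t,\Omega_{t-1})\ge 1/k_t$, a short rearrangement of $h_t$ on each of the events $\{A_t=1\}$ and $\{A_t=0\}$ shows that $h_t$ equals a ratio whose numerator lies in $[-1,1]$ and whose denominator is at least $1/k_t$; hence $h_t\in[-k_t,k_t]$, and $h_t-\theta'\in[-k_t-\theta',\,k_t-\theta']$. One then checks that the positivity requirement $1 + \lambda_t(\theta')(h_t-\theta') > 0$ holds for every realizable $h_t$ exactly when $\lambda_t(\theta') \in \bigl(\tfrac{-1}{k_t-\theta'},\,\tfrac{1}{k_t+\theta'}\bigr)$ (no denominator vanishes, since $k_t - \theta' \ge k_t - 1 \ge 1$). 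This is where the predictable sequence $\lambda_t$ must be range-bounded, and it is essentially the only genuine bookkeeping in the argument.

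\emph{Step 2 (martingale property and Ville).} For fixed $t$ each factor of $\mathcal{K}_t^{+}(\theta_0)$ is bounded, so $\mathcal{K}_t^{+}(\theta_0)$ is integrable; since $\mathcal{K}_{t-1}^{+}(\theta_0)$ and $\lambda_t(\theta_0)$ are $\Omega_{t-1}$-measurable, $\mathbb{E}[\mathcal{K}_t^{+}(\theta_0)\mid\Omega_{t-1}] = \mathcal{K}_{t-1}^{+}(\theta_0)\bigl(1 + \lambda_t(\theta_0)\,\mathbb{E}[h_t - \theta_0\mid\Omega_{t-1}]\bigr) = \mathcal{K}_{t-1}^{+}(\theta_0)$, and $\mathcal{K}_t^{+}(\theta_0)\ge 0$ by Step 1. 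Thus $(\mathcal{K}_t^{+}(\theta_0))_{t\ge 0}$ is a nonnegative martingale with $\mathcal{K}_0^{+}(\theta_0)=1$, and Ville's inequality \citep{Ville1939} gives $\mathbb{P}\bigl(\exists t:\ \mathcal{K}_t^{+}(\theta_0)\ge 1/\alpha\bigr)\le\alpha$. Finally, $\theta_0\in C_T^{\text{Betting}}$ for all $T$ if and only if $\mathcal{K}_t^{+}(\theta_0) < 1/\alpha$ for all $t$; since $(C_T^{\text{Betting}})_T$ is nested decreasing, the event $\{\forall T:\ \theta_0\in C_T^{\text{Betting}}\}$ is exactly the complement of the Ville event, whence $\mathbb{P}(\forall T:\ \theta_0\in C_T^{\text{Betting}})\ge 1-\alpha$.

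The main obstacle is Step 1: carefully tracking how boundedness of $Y_t$, clipping of $\hat f_{t-1}$, and the propensity truncation combine to bound $h_t$, and translating that bound into the precise admissible interval for $\lambda_t(\theta')$ that keeps the capital process nonnegative. Steps 2 and 3 are routine once the parameter space $[-1,1]$ and possibly-negative $\lambda_t$ are accommodated; this two-sided range is the only substantive departure from \citet[Theorem~1]{waudbysmith2022anytimevalid}, and the mirroring device used there is not needed for this lemma (it becomes relevant only when assembling the two-sided Hedged CS of Theorem~\ref{theorem:bettingCS}).
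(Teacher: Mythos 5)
Your proposal is correct and follows essentially the same route as the paper's proof: bound $h_t \in [-k_t, k_t]$ via the truncated propensities, deduce the admissible range $\lambda_t(\theta') \in \bigl(\tfrac{-1}{k_t-\theta'}, \tfrac{1}{k_t+\theta'}\bigr)$ for nonnegativity, show $\prod_{t}\bigl(1+\lambda_t(\theta_0)(h_t-\theta_0)\bigr)$ is a test martingale using predictability and $\mathbb{E}[h_t-\theta_0\mid\Omega_{t-1}]=0$, and invoke Ville's inequality. If anything, your Step 1 is slightly more careful than the paper, since you make explicit that boundedness of $h_t$ also requires the regression estimates $\hat f_{t-1}(a,X_t)$ to lie in $[0,1]$, a condition the paper leaves implicit.
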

\begin{proof}
    Note that $\pi_t(a \mid X_t, \Omega_{t-1}) \in [\frac{1}{k_t}, 1 - \frac{1}{k_t}] $ and consequently $h_t \in [-k_t, k_t]$. Inspired by the truncation technique used by \cite[Theorem 1]{waudbysmith2022anytimevalid}, we show that $M_T(\theta_0)$ in Equation~\eqref{eq:test_martingale} is a test martingale,
\begin{equation} \label{eq:test_martingale}
    M_T(\theta_0) := \prod_{t=1}^{T} \left( 1 + \lambda_t(\theta_0) (h_t - \theta_0) \right).
\end{equation}
For $M_T(\theta_0)$ to be a test martingale, we must show $M_0(\theta_0) = 1$, $\{M_T(\theta_0)\}_{t=1}^{T}$ is non-negative, and that $\mathbb{E}^{T-1}\left( M_T(\theta_0) \right) =  M_{T-1}(\theta_0).$

$M_T(\theta_0)$ is non-negative if $\left(1 + \lambda_t(\theta_0)(h_t - \theta_0) \right) > 0$ $\forall t \in 1, \dots, T$. \cite{waudbysmith2022estimating} state this condition in their Proposition 3 as requiring $ \lambda_t(\theta_0)\left(h_t - \theta_0 \right)  > -1 $. Consider the case when $(h_t - \theta_0) < 0$. We have that

$$1 + \lambda_t(\theta_0) (h_t - \theta_0) \geq 1 + \lambda_t(\theta_0) (- k_t - \theta_0). $$

In this case, $\lambda_t(\theta_0) \in (-\infty, \frac{1}{k_t + \theta_0})$ will give
$$ 1 + \lambda_t(\theta_0) (-k_t - \theta_0)  > 1 + \frac{-k_t - \theta_0}{k_t + \theta_0} = 0.$$

Next consider when $(h_t - \theta_0) > 0$, then setting $\lambda_t(\theta_0) \in \left(\frac{-1}{k_t + \theta_0}\right)$ guarantees $\lambda_t(\theta_0) (h_t - \theta_0) > -1$. Taking the union of these sets gives $\lambda_t(\theta_0) \in \left(\frac{-1}{k_t - \theta_0}, \frac{1}{k_t + \theta_0} \right)$, and we conclude that $M_T(\theta_0)$ is non-negative.

Next, we check the condition on the conditional expectation,
\begin{align*}
    \mathbb{E}^{T-1}\left( M_T(\theta_0) \right) &= \mathbb{E}^{T-1} \left(M_{T-1}(\theta_0)\times (1 + \lambda_T(\theta_0)(h_T - \theta_0)  \right) \\
    &= M_{T-1}(\theta_0)(1 + \lambda_T(\theta_0)\mathbb{E}^{T-1}(h_T - \theta_0) \\
    &= M_{T-1}(\theta_0) (1 + \lambda_T(\theta_0) \times 0) = M_{T-1}(\theta_0).
\end{align*}

$M_T(\theta_0)$ is therefore a test martingale. By Ville's inequality for non-negative supermartingales,
$$\mathbb{P} \left( \exists T \in \mathbb{N}^+, M_T(\theta_0) \geq \frac{1}{\alpha} \right) \leq \alpha.$$ 

It follows that the set 
$$C_T^{\text{Betting}} := \left\{ \theta^{'} \in [-1,1] : \prod_{t=1}^{T} \left(1 + \lambda_t(\theta^{'})(h_t - \theta^{'} )\right) < \frac{1}{\alpha}  \right\}, $$
forms a $(1-\alpha)$ confidence set.
\end{proof}

\subsection{Hedged-CS}\label{appdx:hedgedcs_subappendix}

Following suggested values from \cite{waudbysmith2022estimating}, we set
\begin{equation}\label{eq:lambda-default}
\lambda_t = \sqrt{\frac{2 \log (2/ \alpha)}{\hat{\sigma}^2_{t-1} t \log(1 + t)}} \wedge c, \textrm{ where }c = 0.5,
\end{equation}
\begin{equation*}
    \hat{\theta}_t = \frac{\frac{1}{2} + \sum_{i=1}^{t-1} h_i}{t},
\end{equation*}
\begin{equation*}
    \hat{\sigma}_t^2 = \frac{\frac{1}{4} + \sum_{i= 1}^{t}(h_i - \hat{\theta})^2 }{t}.
\end{equation*}
We define
 \begin{equation*}
    \mathcal{K}_T^{+}(\theta') := \prod_{t=1}^{T}(1 + \lambda_t(\theta') (h_t - \theta')) ,
\hspace{4mm}
        \mathcal{K}_T^{-}(\theta') := \prod_{t=1}^{T}(1 - \lambda_t(\theta') (h_t - \theta')) ,
    \end{equation*}
    \begin{equation*}
        \mathcal{M}_T(\theta') := m \mathcal{K}_T^{+}(\theta') + (1-m)\mathcal{K}_T^{-}(\theta'),
    \end{equation*}

where $m=0.5$ (in general, $m \in [0,1]$). Letting $\lambda_t(\theta') = \lambda_t$ as defined in Equation~\eqref{eq:lambda-default}, and truncated to fall within $\left(\frac{-1}{k_t - \theta'}, \frac{1}{k_t + \theta'} \right)$, both $\mathcal{K}_T^{+}(\theta')$ and $\mathcal{K}_T^{-}(\theta')$ are test martingales when $\theta' = \theta_0$. It follows that $\mathcal{M}_T(\theta')$ is also a test martingale when $\theta' = \theta_0$ \cite[Theorem 3]{waudbysmith2022estimating}. By Lemma~\ref{lemma:bettingCS}, \begin{equation*}
        C_T^{\text{Hedged}} := \bigcap_{t \leq T} \left\{ \theta^{'} \in [-1,1] : \mathcal{M}_T(\theta') < \frac{1}{\alpha}  \right\},
    \end{equation*}
forms a valid $(1-\alpha)$-CS. We now focus computing $C_T^{\mathrm{Hedged}}$.

If $\lambda_t$ does not depend on $\theta^{'}$ (apart from truncating the domain), \cite{waudbysmith2022estimating} show that, empirically, $C_T^{\text{Hedged}}$ forms an interval at each time $T$. We can then search over a grid of possible values of $\theta^{'}$ $\in [-1,1]$, and set lower and upper bounds as 
$$L_T^{\text{Hedged}} = \sup_{t \in \{1,\dots, T\} }\inf_{T}\left\{ \theta^{'} \in [-1,1] : \mathcal{M}_T(\theta') < \frac{1}{\alpha}  \right\}, $$
$$U_T^{\text{Hedged}} = \inf_{t \in \{1,\dots, T\} }\sup_T \left\{ \theta^{'} \in [-1,1] : \mathcal{M}_T(\theta') < \frac{1}{\alpha}  \right\}.$$

As a result, $[L_T^{\text{Hedged}}, U_T^{\text{Hedged}}]$ forms a $(1-\alpha)$-CS for $\theta_0$.

\section{Proof of Theorem~\ref{theorem:prpl_empbern}}\label{appdx:prpl_proof}

\begin{proof}
Note that $\xi_t - \hat{\xi}_{t-1} > -1$. Given this fact, \citet[Lemma 1]{waudbysmith2022anytimevalid} show that the process
\begin{equation}\label{eq:test-supermartingale}
M_T = \exp \left\{\sum_{t = 1}^T \lambda_t \left(\xi_t - \frac{\theta_0}{k_t + 1} \right) - \sum_{t = 1}^{T} \left(\xi_t - \hat{\xi}_{t-1} \right)^2 \psi_{E}(\lambda_t)    \right\},
\end{equation}
is a test supermartingale. Using Ville's inequality, they invert $M_t$ to form a $(1-\alpha)$-lower CS. We define an $(1-\alpha)$-Upper CS by defining $\xi_t = \frac{- h_t}{k_t + 1}$, and apply a union bound, which gives the result.

\end{proof}

\section{Proof of Theorem~\ref{theorem:asympCS}}\label{appdx:asympcs_proof}

\paragraph{Proof Outline}

$(h_t)_{t=1}^T$ is recognized to be a sequence of random variables with conditional mean $\theta_0$ and conditional variance $\sigma^2$. This allows us to invoke Theorem 2.5 from \cite{waudbysmith2023timeuniform}. In order to do so, we must verify three assumptions. 
\paragraph{Assumption 1 (Cumulative variance diverges almost surely)} 
This assumption is satisfied in Appendix~\ref{appdx:mds_clt_proof} where we establish that the average conditional variance of $z_t$ (which equals the average conditional variance of $h_t$) does not vanish. It follows that their sum diverges. 
\paragraph{Assumption 2 (Lindeberg-type uniform integrability)} 
We mush show that there exists some $0 < \kappa < 1$ such that
$$\sum_{t=1}^{\infty} \frac{\mathbb{E} \left[ (h_t - \theta_0)^2 \mathbbm{1} \left((h_t - \theta_0)^2 > V_{t}^\kappa  \right) \mid \Omega_{t-1}  \right]}{ V_{t}^\kappa} < \infty \text{ almost surely, }$$
where $V_t= \sum_{i=1}^t \sigma_i^2 $.

As is noted in \cite{waudbysmith2023timeuniform}, this equation is satisfied if $1/K \leq \mathbb{E}~|h_t - \theta_0|^q < K$ a.s. for all $t \geq 1$ and for some constant $K>0$. Without loss of generality, assume $q = 2+\delta$. We have that
$$\mathbb{E} | h_t - \theta_0 |^q \leq \mathbb{E}(|h_t|^q) + \mathbb{E}(|\theta_0|^q).$$

Note that $\mathbb{E}(|h_t|^q) \propto \mathbb{E}(|Y_t|^q) < \infty$. Then pick $K^* = K + \mathbb{E}(|h_t|^q) $ and the condition holds.\\
\paragraph{Assumption 3 (Consistent variance estimation)}
We must show that the estimator, $\hat{\sigma}^2_t$, of $\tilde{\sigma}_t^2$ satisfies
$$\frac{\hat{\sigma}^2_t}{\tilde{\sigma}^2_t} \xrightarrow{a.s.} 1.$$

Our estimator is the sample average of the variances estimated thus far. We note that $z_t$ is a square-integrable MDS. Hence, we utilize the Strong Law of Large Numbers for a MDS, and we can establish that the sample average of the squared deviations converges almost surely to the variance of $z_t$. We establish that $\hat{\sigma}^2(z_t) = \hat{\sigma}^2(h_t)$ by showing

\begin{align*}
    \hat{\sigma}^2(z_t) &= \frac{1}{T} \sum_{t= 1}^T \left(z_t - \bar{z_t} \right)^2 \\
    &= \frac{1}{T} \sum_{t= 1}^T \left(h_t-\theta_0 - \frac{1}{T}\sum_{t=1}^T(h_t - \theta_0) \right)^2 \\
    &= \frac{1}{T} \sum_{t=1}^T \left( h_t - \theta_0 + \theta_0 - \bar{h}_T \right)^2 = \hat{\sigma}^2(h_t).
\end{align*}

By the SLLN, $\hat{\sigma}^2(h_t) = \hat{\sigma}^2(z_t) \xrightarrow{a.s.} \text{Var}(z_t) = \text{Var}(h_t)$.

\section{Implementation Details}\label{appdx:implementation}

\subsection{Performance Metrics}

We explicitly define ``Cumulative Error Probability'' and ``Power'', the performance metrics shown in Figures~\ref{fig:miscoverage-bernoulli},~\ref{fig:results_knnbernoulli}, and~\ref{fig:results_knnbounded}. These are both functions of the sample size. 

At time $T$, the Cumulative Error Probability (Error) is defined as $$\mathrm{Error}(T) := \mathbb{P}(\exists t \in \{50,\dots,T\}~\mathrm{ s.t. }~\theta_0 \not \in [L_t,U_t] ).$$
We can empirically estimate this probability over 1000 repetitions of our simulation. For each iteration, we construct confidence sets at each time. We denote the confidence set at time $T$ for iteration $i$ as $[L_{i,T}, U_{i,T}]$. We define our estimate as
$$\widehat{\mathrm{Error}}(T) := \frac{1}{1000} \sum_{i = 1}^{1000}\mathbbm{1}[\exists t \in \{50,\dots,T\}~\mathrm{ s.t. }~\theta_0 \not \in [L_{i,t},U_{i,t}]]. $$

In Figure~\ref{fig:miscoverage-bernoulli}, we denote the null hypothesis as $\theta_{H_0} = 0$. At time $T$, we denote the power as 
$$\mathrm{Power}(T) := \mathbb{P}(\exists t \in \{50,\dots,T\}~\mathrm{ s.t. }~\theta_{H_0} \not \in [L_t,U_t] ).$$
Our empirical estimate of this function is
$$\widehat{\mathrm{Power}}(T) := \frac{1}{1000} \sum_{i = 1}^{1000}\mathbbm{1}[\exists t \in \{50,\dots,T\}~\mathrm{ s.t. }~\theta_{H_0} \not \in [L_{i,t},U_{i,t}]] .$$

\subsection{$\hat{\theta}^{\mathrm{A2IPW}}$ T-Statistic}\label{appdx:tstat}

In Section~\ref{sec:fixed_time_theory}, Theorem~\ref{theorem:MDS-CLT} gives an asymptotic distribution for the $\hat{\theta}^{\mathrm{A2IPW}}$ estimator which depends on $\sigma^2$. In practice, we typically do not have access to $\sigma^2$ and we must estimate this quantity, denoted as $\hat{\sigma}^2$. With $\hat{\sigma}^2 \xrightarrow[]{p}\sigma^2$, we may invoke Slutsky's Theorem, and use $\hat{\sigma}^2$ in place of $\sigma^2$. Similarly to \cite{kato2021}, we call this our t-statistic,
\begin{equation*}
    \frac{\sqrt{T} (\hat{\theta}^{\mathrm{A2IPW}} - \theta_0)}{\hat{\sigma}^2} \xrightarrow[]{d} N(0,1).
\end{equation*}

In Assumption 3 of Appendix~\ref{appdx:asympcs_proof}, we show that our variance estimator converges almost surely, implying convergence in probability. Our asymptotic CI is
\begin{equation*}
    C_T := \bar{h}_t \pm z_{1-\frac{\alpha}{2}}\frac{\hat{\sigma}^2}{\sqrt{T}},
\end{equation*}
where $\hat{\sigma}^2 = \frac{1}{T} \sum_{t=1}^T \left( h_t - \bar{h}_T \right)^2.$
\subsection{Bernoulli Outcome Simulation}\label{appdx:implementation_bernoulli}
In Section~\ref{sec:bernoulli-sim} and Figure~\ref{fig:miscoverage-bernoulli} plots (a) and (b), we simulate $(X_t, A_t, Y_t)_{t=1}^{T = 5000}$, where
$$\mathbf{X}_t \sim N( [\mathbf{0}_3 ] , \mathbf{I}_3 ) , $$
$$\boldsymbol{\beta}^T = \begin{bmatrix}-2,-3,5\end{bmatrix},$$
$$\pi_t =  \left( \frac{\sqrt{\hat{v}_{t-1}(1,\mathbf{X}_t)}}{\sqrt{\hat{v}_{t-1}(1,\mathbf{X}_t)} + \sqrt{\hat{v}_{t-1}(0,\mathbf{X}_t)}} \right), $$
$$k_t =  \frac{k_{t-1}}{.999}, \hspace{1mm} k_1 = 2 \text{ if method not Kato, else } k_t = 5,$$
$$A_t \sim \text{Bernoulli}\left(\left( \pi_t \vee \frac{1}{k_t} \right) \wedge (1-\frac{1}{k_t}) \right),$$

$$p_t = 0.9\times \text{logit}\left( 0.5 +  \mathbf{X}_t \boldsymbol{\beta} \right) + 0.1A_t,$$
$$Y_t \sim \text{Bernoulli} \left( p = p_t \right).$$
With the data generating process above, $\theta_0 = 0.1$. We ran two separate simulations, where one used k-Nearest Neighbors Regressor (kNN) and the other used Random Forest Regressor (RF) for $\hat{f}$ and $\hat{e}$. We employ sample-splitting and cross-fitting in an effort to avoid over fitting. For the first 50 samples, we let $\pi_t(1 \mid X_t, \Omega{t-1}) = 0.5$ while sufficient samples are collected to give reliable regression estimates. For the regression estimates used in $h_t$, we use sample means conditioned on $A_t$ until $t=50$. We ran 1000 iterations using the DGP above, results for the simulation when RF is used are shown in Figure~\ref{fig:miscoverage-bernoulli}. We provide results for the simulation using kNN in Figure~\ref{fig:results_knnbernoulli}. 

\begin{figure}
    \centering
    \includegraphics[width = 0.9\textwidth]{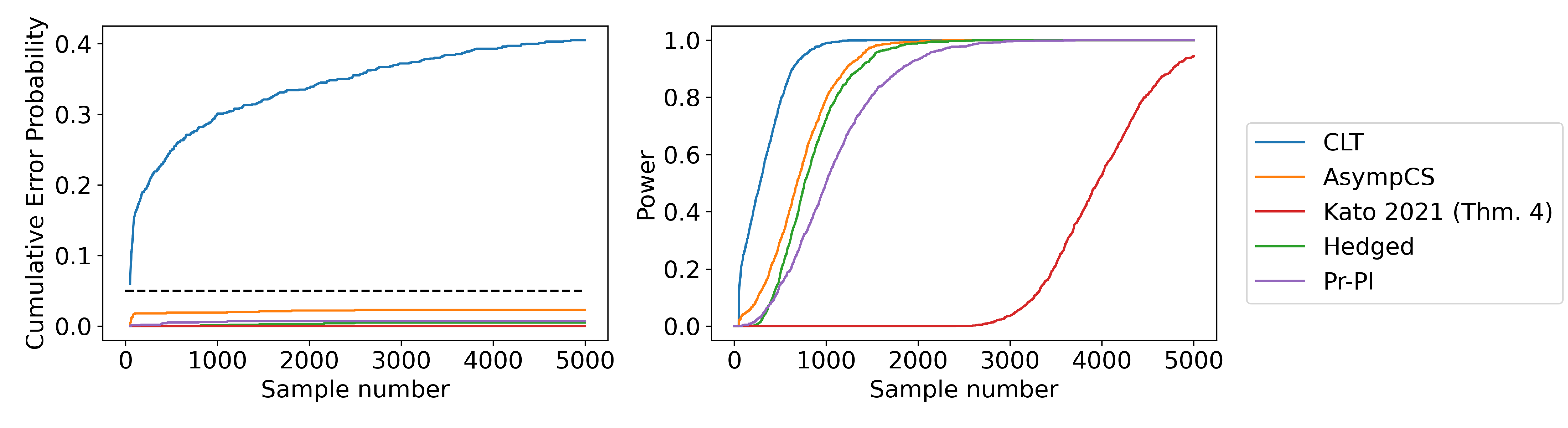}
    \caption{Utilizing a kNN regressor for the protocol used in Figure~\ref{fig:miscoverage-bernoulli}. The policy used for Pr-PI is modified to be truncated within $[0.2,0.8]$.}
    \label{fig:results_knnbernoulli}
\end{figure}

\subsection{Bounded Continuous Outcomes Simulation}\label{appdx:implementation_bounded}

We now consider simulations with a continuous response, as described in Section~\ref{sec:bounded-sim} and shown in Figure~\ref{fig:miscoverage-bernoulli}, plots (c) and (d). Data was simulated as
$$X_i \sim \text{Uniform}(0,1), \hspace{2mm} \text{for } i \in \{1,2,3\},$$
$$\pi_t =  \left( \frac{\sqrt{\hat{v}_{t-1}(1,\mathbf{X}_t)}}{\sqrt{\hat{v}_{t-1}(1,\mathbf{X}_t)} + \sqrt{\hat{v}_{t-1}(0,\mathbf{X}_t)}} \right) ,$$
$$k_t =  \frac{k_{t-1}}{.999}, \hspace{1mm} k_1 = 2 \text{ if method not Kato, else } k_t = 5,$$
$$A_t \sim \text{Bernoulli}\left(\left( \pi_t \vee \frac{1}{k_t} \right) \wedge (1-\frac{1}{k_t}) \right),$$
$$\boldsymbol{\beta}^T = [-0.04, -0.01, 0.05],$$
$$\epsilon_0 \sim \text{Uniform}(-0.05, 0.05,)$$
$$Y_0 = 0.4 + \mathbf{X}  \boldsymbol{\beta}  + \epsilon_0,$$
$$\epsilon_1 \sim \text{Uniform}(-4.5\mathbf{X}  \boldsymbol{\beta}, 4.5\mathbf{X}  \boldsymbol{\beta}),$$
$$Y_1 = 0.4 + \mathbf{X}  \boldsymbol{\beta} + \theta_0 + \epsilon_1.$$

In our simulations we set $\theta_0 = 0.1$. Again, we use kNN and random forest (RF) regressors to estimate $\hat{f}$ and $\hat{e}$. Similarly, we employ sample-splitting and cross-fitting. For the first 50 samples, we again let $\pi_t(1 \mid X_t, \Omega{t-1}) = 0.5$. For the regression estimates used in $h_t$, we use sample means conditioned on $A_t$ until $t=50$. We ran 1000 iterations using the DGP above, results for the simulation when RF is used are shown in Figure~\ref{fig:miscoverage-bernoulli}. We provide results for the simulation using kNN in Figure~\ref{fig:results_knnbounded}. 

\begin{figure}
\includegraphics[width=0.9\textwidth]{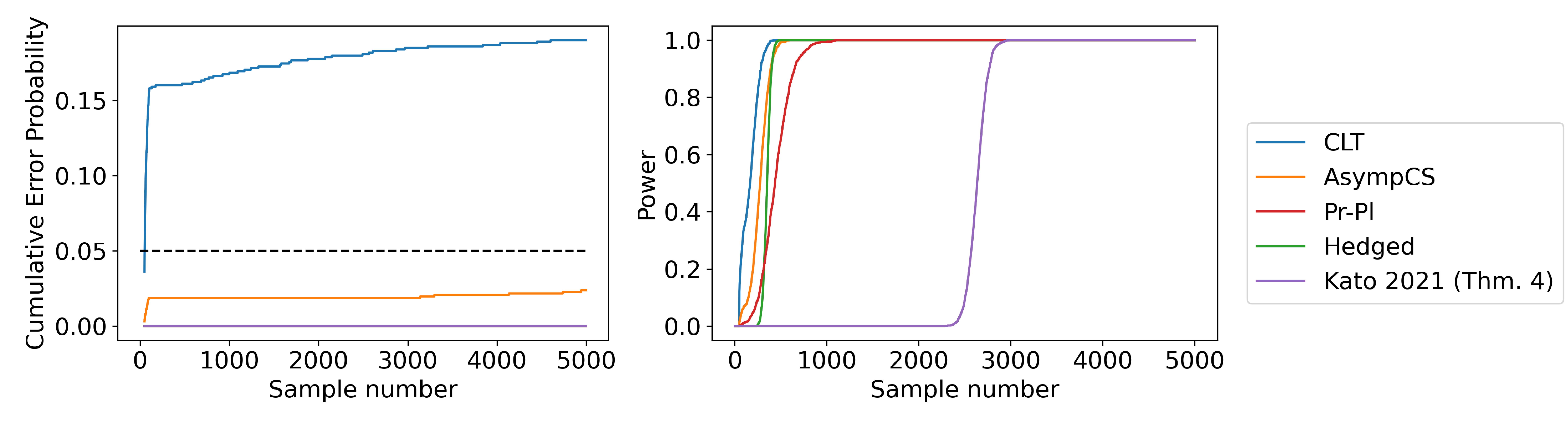}
    \caption{Results for simulation described in Appendix~\ref{appdx:implementation_bounded} using a k-Nearest Neighbor regressor.}\label{fig:results_knnbounded}
\end{figure}
\begin{figure}
    \centering
    \includegraphics[width = 0.5\textwidth]{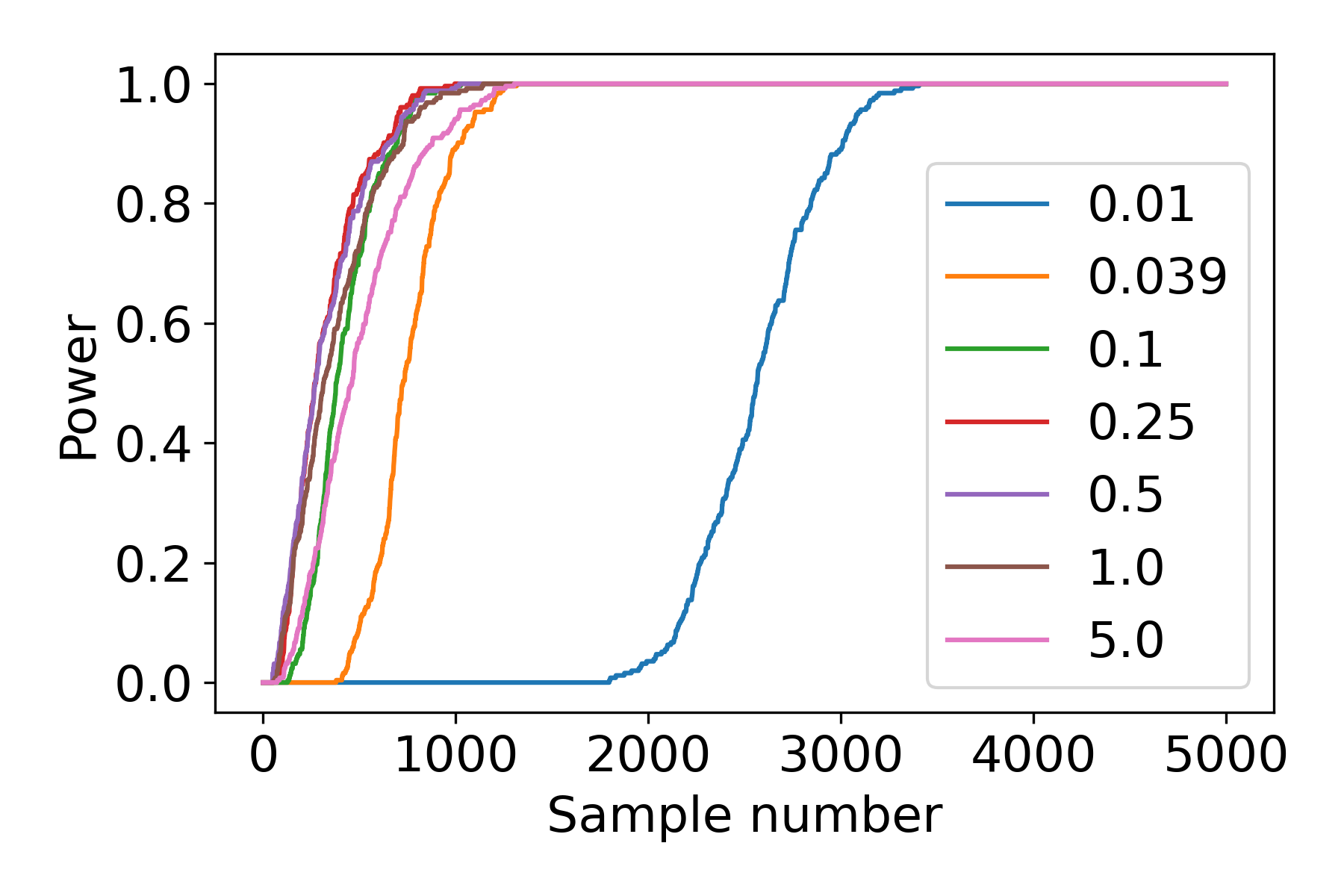} 
    \caption{Power curves for AsympCSs constructed using different values of $\rho$. Curves are based on 256 iterations of the simulation setup described in Appendix~\ref{appdx:implementation_bernoulli}.}
    \label{fig:varyingrho}
\end{figure}

\subsection{Effect of Truncation on Inference}\label{appdx:more_sim}

Although the modifications made to the DGP in Appendix~\ref{appdx:implementation_bernoulli} are only minor made to the simulation described in Section~\ref{sec:truncation-sim} and shown in Figure~\ref{fig:truncate}, we provide an explicit DGP for completeness. We assume that we have oracle access to $v$, allowing us to calculate $\pi^{\mathrm{AIPW}}$. We set $k_t$ to be a constant. The DGP is
$$\mathbf{X}_t \sim N( [\mathbf{0}_3 ] , \mathbf{I}_3 ) , $$
$$\boldsymbol{\beta}^T = \begin{bmatrix}-2,-3,5\end{bmatrix},$$
$$\pi_t =  \left( \frac{\sqrt{{v}(1,\mathbf{X}_t)}}{\sqrt{{v}(1,\mathbf{X}_t)} + \sqrt{{v}(0,\mathbf{X}_t)}} \right), $$
$$k_t =  \frac{1}{\pi_{t,min}},$$
$$A_t \sim \text{Bernoulli}\left(\left( \pi_t \vee \frac{1}{k_t} \right) \wedge (1-\frac{1}{k_t}) \right),$$

$$p_t = 0.1\times \text{logit}\left( 0.5 +  \mathbf{X}_t \boldsymbol{\beta} \right) + 0.4A_t,$$
$$Y_t \sim \text{Bernoulli} \left( p = p_t \right).$$

\subsection{Selecting $\rho$ for an AsympCS}
When constructing an AsympCS, the analyst must select a value for $\rho$. If the analyst wishes to minimize width of the interval produced at a specific sample size, $T$, then the analyst can accomplish this by setting 
$$\rho =\sqrt{\frac{-2 \log\alpha + \log (-2 \log \alpha + 1)}{T}}.$$
In practice, the analyst may not have prior knowledge of the effect size magnitude or may not know how long the experiment could last. In this case, it may not be clear for which $T$ $\rho$ should be tuned to. In our simulations we begin constructing CSs at a sample size of $T = 50$. For the sake of simplicity in presentation, we chose to set $\rho = 0.5$ across all experiments. Setting $\rho=0.5$ yields an AsympCS with tight intervals approximately at the start of inference. To understand the effect of setting $\rho = 0.5$ on the performance of the AsympCS, we performed 256 iterations of the Bernoulli outcome simulation described in Appendix~\ref{appdx:implementation_bernoulli} while varying $\rho$. We found that setting $\rho = 0.5$ for this scenario is a reasonable choice and the resulting AsympCS produces intervals with widths that allow for high power early in the experiment. Figure~\ref{fig:varyingrho} shows power curves of the AsympCSs constructed using different levels of $\rho$. 

\end{document}